\documentclass[11pt]{article}

\usepackage[preprint]{acl}

\usepackage{times}
\usepackage{latexsym}
\usepackage[T1]{fontenc}
\usepackage[utf8]{inputenc}
\usepackage{microtype}
\usepackage{inconsolata}

\usepackage{graphicx}
\usepackage{subcaption}
\usepackage{booktabs}
\usepackage{amsmath,amssymb,mathtools}
\usepackage{amsthm}
\usepackage{multirow}
\usepackage[table]{xcolor}
\usepackage{dblfloatfix}
\usepackage{algorithm}
\usepackage{algpseudocode}
\usepackage{enumitem}
\usepackage[capitalize,noabbrev]{cleveref}

\allowdisplaybreaks

\theoremstyle{plain}
\newtheorem{theorem}{Theorem}[section]

\newtheorem{lemma}[theorem]{Lemma}
\newtheorem{corollary}[theorem]{Corollary}
\theoremstyle{definition}

\theoremstyle{remark}
\newtheorem{remark}[theorem]{Remark}


\definecolor{ccpoblue}{RGB}{25, 95, 160}

\title{Counterfactual Credit Policy Optimization for Multi-agent Collaboration}

\author{
	\textbf{Zhongyi Li}\textsuperscript{1}
	\and
	\textbf{Wan Tian}\textsuperscript{2}
	\and
	\textbf{Jinju Chen}\textsuperscript{3}
	\and 
	\textbf{Huiming Zhang}\textsuperscript{1}
	\and
	\textbf{Yang Liu}\textsuperscript{2}
	\\
	\textbf{Yikun Ban}\textsuperscript{1}\thanks{Corresponding authors.}
	\and
	\textbf{Fuzhen Zhuang}\textsuperscript{1}\footnotemark[1]
	\\
	\textsuperscript{1}Beihang University
	\quad
	\textsuperscript{2}Peking University
	\quad
	\textsuperscript{3}Beijing University of Posts and Telecommunications
	\\[0.8ex]
	\small\textcolor{ccpoblue}{\textbf{Project Page:}}~\url{https://bhai114.github.io/ccpo_page/}
}

\definecolor{bgblue}{RGB}{220, 230, 245}

\begin{document}
	\maketitle

	\begin{abstract}
		Collaborative multi-agent large language models (LLMs) can solve complex reasoning tasks by decomposing roles, but reinforcement learning for such systems is limited by credit assignment: shared terminal rewards obscure individual contributions and can encourage free-riding. We introduce two optimizer-agnostic credit assignment methods for converting joint outcomes into agent-specific learning signals. \textbf{Counterfactual Credit for Policy Optimization (CCPO)} estimates an agent's marginal contribution by comparing the realized joint outcome with a counterfactual outcome where that agent is removed. \textbf{Self-Evaluated Credit for Policy Optimization (SEPO)} uses constrained self- and peer-evaluations as a verifier-anchored credit signal while keeping the external task outcome dominant. Both operate at the reward-construction layer rather than as policy optimizers, producing role-specific rewards or advantages for GRPO, GSPO, or REINFORCE++. We instantiate these credit signals in a sequential Think--Solve setting and evaluate them on mathematical reasoning benchmarks. Results show that explicit credit assignment often improves dual-agent reasoning, especially on MATH500 and several out-of-distribution settings, while gains vary across models and datasets. Our code is available at: \url{https://github.com/bhai114/ccpo}.

	\end{abstract}

	\section{Introduction}
	
	LLMs have become increasingly capable on complex reasoning, mathematical problem solving, and code generation \cite{li2025system,duan2025codeplot,lyu2025automatic,he2025llm}. These capabilities make LLMs promising building blocks for systems that decompose difficult tasks, assign specialized roles, and combine intermediate reasoning into final decisions \cite{ban2026epistemic}. Yet single-model inference remains brittle on long-horizon problems, where exploration, intermediate verification, and self-correction are limited. Multi-agent LLM  collaboration is therefore an important direction \cite{chen2026weak,zhang2026heterogeneous}: by letting agents play complementary roles, such as a \emph{Thinker} that proposes reasoning and a \emph{Solver} that produces the final answer, collaborative systems can improve reliability at inference time and may also learn stronger cooperative behaviors when trained jointly \cite{tran2025multi}.
	
	Training collaborative LLM systems remains underdeveloped because credit assignment is unresolved \cite{chen2025harnessing,lin2025speaking,nagpal2025leveraging}: a sparse and delayed joint outcome must be attributed to heterogeneous agents that generate long, discrete text trajectories. Existing multi-agent RL practice often relies on shared global rewards or value-decomposition surrogates developed mainly for small-scale continuous-control settings \cite{jiang2025qllm}, but these solutions are poorly matched to LLM reasoning. Shared rewards cannot identify which message helped or harmed the final answer, can reinforce redundant or detrimental behavior, and ignore role asymmetry. Counterfactual credit assignment, including counterfactual baselines \cite{foerster2018counterfactual} and Shapley-style marginalization, is conceptually appealing but costly for long textual trajectories. The limitation is visible on MATH500 \cite{lightman2023letsverifystepstep}: an agent's marginal contribution can vary substantially across model scales, and in some pairings a Solver can outperform the full collaboration when answering alone. Under a shared terminal reward, both agents still receive the same learning signal even when one agent contributes little or harms the joint outcome.
	
	\begin{figure*}[t]
		\centering
		\includegraphics[width=0.99\textwidth]{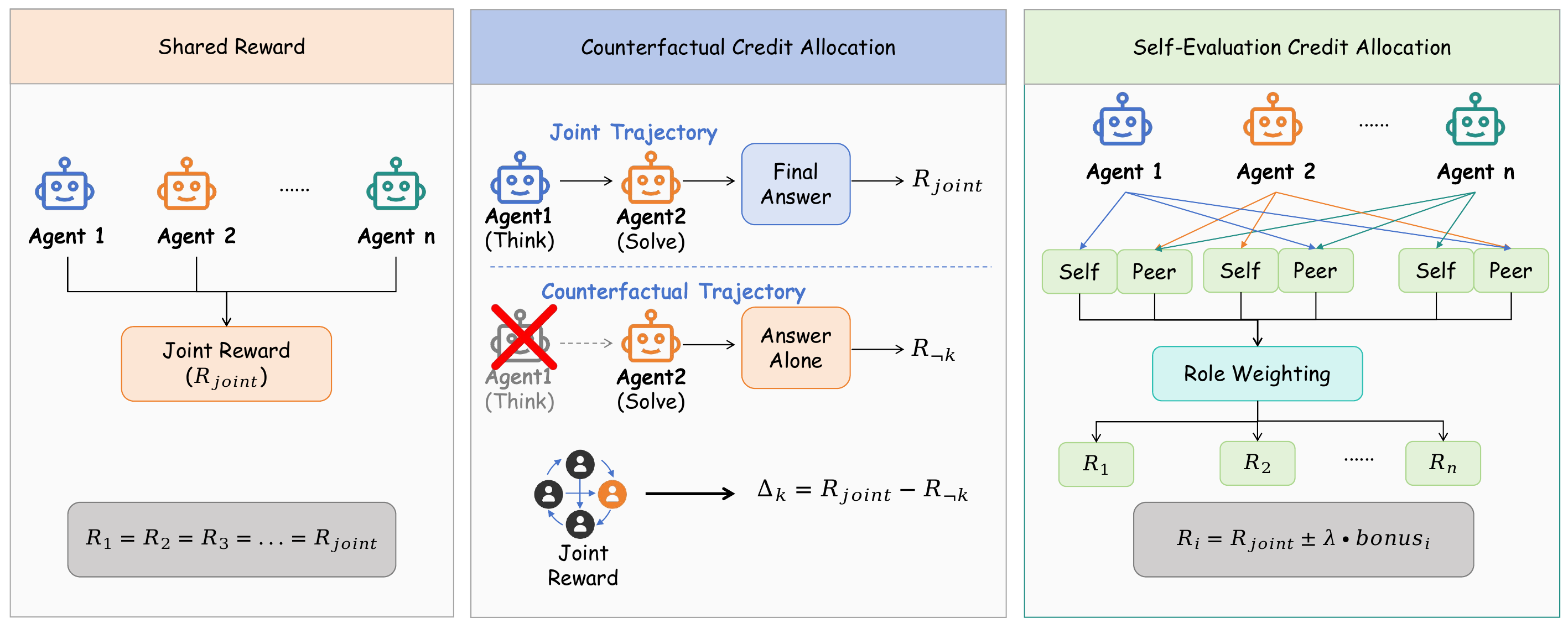}
		\caption{Overview of the reward-construction layer. Shared rewards, CCPO, and SEPO define credit signals that are passed to a separate policy optimizer.}
		\label{fig:framework}
	\end{figure*}
	
	We study credit assignment as a first-class interface between collaborative rollouts and policy optimization. Our goal is to produce role-specific learning signals that remain anchored to the external verifier, are sensitive to each agent's marginal influence, and are compatible with modern sequence-level optimizers. To this end, we propose two optimizer-agnostic credit modules, illustrated in \cref{fig:framework}. CCPO asks how the joint outcome would change if one agent's contribution were removed while the remaining agents are kept fixed, yielding a marginal credit signal for collaborative training. SEPO uses constrained self- and peer-evaluations as bounded adjustments around an external verifier outcome, rather than replacing verification with self-judgment. CCPO and SEPO determine what each agent is credited or blamed for; optimizers such as GRPO \cite{shao2024deepseekmathpushinglimitsmathematical}, GSPO \cite{zheng2025group}, and REINFORCE++ \cite{hu2025reinforcestabilizingcriticfreepolicy} determine how policies are updated from those signals. We instantiate these credit modules in a sequential Think--Solve topology, where the Thinker generates intermediate reasoning and the Solver produces the final answer, making asymmetric contribution explicit.
	
	Our contributions are summarized as follows:
	\begin{itemize}[leftmargin=*, topsep=2pt, itemsep=2pt, parsep=0pt, partopsep=0pt]
		\item We formulate role-sensitive credit assignment for collaborative LLM training and introduce CCPO as a lightweight counterfactual credit module against shared final-reward training. We show that, under conditional-independence assumptions, CCPO can be interpreted as a valid baseline-subtracted policy-gradient signal and may reduce gradient variance compared with shared rewards.
		\item We introduce SEPO as a verifier-anchored self-evaluation credit module that uses structured self- and peer-evaluations as bounded credit adjustments.
		\item Empirically, CCPO rewards improve in-distribution MATH500 performance across the evaluated base models and provide gains on several out-of-distribution benchmarks, while SEPO rewards are competitive in selected GSPO settings.
	\end{itemize}



	\section{Problem Setup}

	We consider a cooperative system with $K$ LLM agents $U=\{u_1,\ldots,u_K\}$, where agent $u_i$ follows a stochastic policy $\pi_{\theta_i}$.
	Given a prompt $x \sim \mathcal{D}$, agents generate textual outputs; depending on the collaboration topology, an agent's generation may condition on other agents' outputs.
	A task-specific evaluator returns a bounded scalar reward $R(\cdot)\in[-1,+1]$ (e.g., exact-match correctness on reasoning benchmarks).
	
	For each prompt $x$, we sample $N$ joint rollouts. The $j$-th rollout is denoted by
	$
	\tau^{(j)} \;=\; \big(y_1^{(j)},\ldots,y_K^{(j)}\big),
	R_{\mathrm{joint}}^{(j)} \;:=\; R\!\big(\tau^{(j)}\big).
	$
	Our goal is to optimize each $\pi_{\theta_i}$ using \emph{agent-specific} learning signals (e.g., advantages) that reflect agent $u_i$'s marginal contribution to the joint reward $R_{\mathrm{joint}}$.
	
	In \emph{sequential} collaboration, agents act in a fixed order and pass intermediate text forward.
	Given $x$, the first agent generates
	$
	y_1 \sim \pi_{\theta_1}(\cdot \mid x),
	$
	and for $i=2,\ldots,K$, agent $u_i$ generates conditioned on the prompt and all previous outputs,
	$
	y_i \sim \pi_{\theta_i}\big(\cdot \mid x, y_{1:i-1}\big), \text{where } y_{1:i-1} := (y_1,\ldots,y_{i-1}).
	$
	The final team output is taken to be the last agent's output,
	$
	\widehat{y} \;:=\; y_K,
	$
	and the joint reward is computed as $R_{\mathrm{joint}} = R(x, \widehat{y})$ (or more generally $R_{\mathrm{joint}}=R(x, y_{1:K})$ if the evaluator uses the full transcript).
	


	\section{Credit Assignment and Reward Construction}
	\label{sec:ccpo_method}
	
	CCPO and SEPO separate credit allocation from policy optimization. Given a collaborative rollout, a verifier or task evaluator produces the joint outcome, and a credit-assignment module maps that outcome into role-specific rewards. These rewards can then be normalized into advantages and passed to GRPO, GSPO, REINFORCE++, or other policy-gradient optimizers. In the Think--Solve setting used in this paper, the Thinker generates an intermediate reasoning trace $y_1$, and the Solver produces the final answer $y_2$ conditioned on $(x,y_1)$.
	
	\subsection{Shared Joint Reward}
	The simplest baseline assigns the same terminal reward to every agent:
	\[
	r_1^{(j)} = r_2^{(j)} = R_{\mathrm{joint}}^{(j)}.
	\]
	This baseline is easy to optimize, but it does not distinguish which role helped or hurt the final answer.
	
	\subsection{CCPO}
	Counterfactual credit asks how the joint outcome changes when one agent is removed while the others stay fixed. For agent $i$ and rollout $j$, let the realized joint reward be $R_{\mathrm{joint}}^{(j)}$, and let $R_{\neg i}^{(j)}$ be the counterfactual reward after removing agent $i$. We use $R_{\neg i}^{(j)}$ as the canonical notation throughout the paper; in the Think--Solve instantiation, the solver-only reward is the special case $R_{\neg 1}^{(j)}=R_{\mathrm{solo}}^{(j)}$. The marginal contribution is
	\[
	\Delta_i^{(j)} = R_{\mathrm{joint}}^{(j)} - R_{\neg i}^{(j)}.
	\]
	Positive values indicate that agent $i$ improves the joint outcome; non-positive values indicate that the agent is redundant or harmful under that rollout.
	
	This makes counterfactual credit fundamentally different from shared rewards. Under a shared terminal signal, all agents are updated as if they contributed equally to the final answer, even when one role is mostly carrying the collaboration. In contrast, $\Delta_i^{(j)}$ isolates the agent's marginal effect relative to what the remaining agents can already achieve, so the resulting update is role-sensitive and explicitly discourages free-riding. From an optimization perspective, the counterfactual term can be viewed as a baseline-subtracted return that preserves the external task outcome as the anchor while redistributing credit to the agent whose presence actually changes the result.
	
	In Think--Solve, removing the Thinker means asking the Solver to answer directly from the prompt. This gives a concrete comparison between the collaborative rollout $(y_1^{(j)}, y_2^{(j)})$ and a solver-only rollout, so the Thinker is rewarded for genuinely improving the final answer rather than for merely participating in the trajectory. In this instantiation, the solver-only counterfactual is sampled independently of the current Thinker output, which is the condition required for interpreting $R_{\neg 1}^{(j)}$ as an action-independent baseline for the Thinker in the analysis. We do not symmetrically define a leave-one-out final-answer counterfactual for the Solver, because removing the Solver would leave no final answer in this topology; instead, the Solver uses the fused team/solo signal specified in Appendix~\ref{app:dual_agent_details}. Other collaboration topologies may support counterfactuals for additional roles, but each counterfactual must be constructed so that it does not depend on the removed agent's sampled action in the current rollout. Compared with coalitional or Shapley-style attribution, this design is lightweight because it avoids enumerating agent subsets and relies on a small number of counterfactual evaluations per prompt.
	In the implementation, $\Delta_i^{(j)}$ is treated as the raw credit signal and then passed through the shaping and normalization steps described in Appendix~\ref{app:ccpo_details}. This keeps CCPO compatible with long sequences, heterogeneous reward scales, and optimizer-agnostic training pipelines such as GRPO, GSPO, or REINFORCE++. Importantly, the raw reward scale need not be identical across credit allocators: binary $[0,1]$ task rewards, counterfactual margins, and $\{-1,+1\}$ verifier outcomes are converted into role-specific rewards and then normalized within the optimizer-specific advantage pipeline.
	
	\subsection{SEPO}
	In the two-agent Think--Solve setting, SEPO uses LLM-generated self and peer assessments to allocate credit. An external verifier first scores the Solver's final answer as
	\[
	R_{\mathrm{ver}}^{(j)} \in \{-1,+1\}.
	\]
	The Thinker and Solver each output constrained self and peer scores:
	\[
	(p_i^{\mathrm{self},(j)}, p_i^{\mathrm{peer},(j)}) \in \mathcal{V}\times \mathcal{V},
	\qquad i\in\{1,2\},
	\]
	where $\mathcal{V}$ is a finite ordered rubric of preset levels. The fused scores are
	\begin{align}
		s_1^{(j)} &= \eta\,p_1^{\mathrm{self},(j)} + (1-\eta)\,p_2^{\mathrm{peer},(j)}, \\
		s_2^{(j)} &= \eta\,p_2^{\mathrm{self},(j)} + (1-\eta)\,p_1^{\mathrm{peer},(j)}.
	\end{align}
	We then normalize them into role weights:
	\[
	w_i^{(j)} = \frac{s_i^{(j)}}{s_1^{(j)} + s_2^{(j)} + \epsilon}, \qquad i\in\{1,2\}.
	\]
	If group centering is enabled, we use
	\[
	\mathrm{bonus}_i^{(j)} = w_i^{(j)} - \operatorname{mean}_{j' \in \mathcal{G}(x)}\!\bigl(w_i^{(j')}\bigr).
	\]
	Finally, the SEPO role reward is
	\[
	r_i^{(j)} =
	\begin{cases}
		R_{\mathrm{ver}}^{(j)} + \lambda_{\mathrm{credit}}\,\mathrm{bonus}_i^{(j)}, & R_{\mathrm{ver}}^{(j)} = +1,\\
		R_{\mathrm{ver}}^{(j)} - \lambda_{\mathrm{blame}}\,\mathrm{bonus}_i^{(j)}, & R_{\mathrm{ver}}^{(j)} = -1.
	\end{cases}
	\]
	with default values $\lambda_{\mathrm{credit}}=\lambda_{\mathrm{blame}}=0.2$.
	SEPO keeps the verifier outcome as the dominant signal and uses self and peer evaluations only as a bounded credit-allocation adjustment. It is therefore not a replacement for task verification: if the final answer is wrong, the base signal remains negative and the self/peer scores only redistribute responsibility around that outcome. Because these scores are produced by LLMs, we treat SEPO as an empirical, complementary allocator rather than a theoretically guaranteed credit signal.

	\section{Theoretical Analysis of Counterfactual Credit}
	\label{sec:ccpo_theory}
	
	
	This section analyzes CCPO. SEPO is intentionally bounded to serve as a credit-redistribution signal around the verifier outcome; we therefore treat it empirically rather than claiming separate optimization guarantees for it. The main theoretical point is modest: under explicit conditional-independence assumptions, counterfactual rewards can be interpreted as valid baseline-subtracted returns for policy-gradient estimation.
	
	Our main implementation uses GRPO as the base optimizer, which applies clipped importance ratios and optionally KL monitoring to keep each policy update conservative.
	Other policy-gradient optimizers can consume the same CCPO rewards, but the monotonicity statement below is tied to conservative trust-region-style updates rather than to the reward allocator itself.
	Accordingly, we present an idealized monotonic-improvement characterization under a KL trust-region condition; this result should be read as a sanity check for conservative block updates, not as a convergence guarantee for practical GRPO training. The proof
	is deferred to Appendix~\ref{app:ccpo_convergence}.
	\begin{theorem}
		\label{prop:ccpo_tr_mono}
		Consider alternating updates where one agent $k$ is updated while the other agents are fixed at $\theta_{-k}^t$.
		Let $\pi_{\text{old}}:=\pi_{\theta_k^{t}}$ and $\pi_{\text{new}}:=\pi_{\theta_k^{t+1}}$ be the policies before and after
		one block update for agent $k$. Assume the update is conservative in the sense that it satisfies a KL trust-region bound
		\begin{equation}
			\label{eq:prop_eff_tr}
			D_{\mathrm{KL}}^{\max}(\pi_{\text{old}},\pi_{\text{new}})\le \delta,
		\end{equation}
		which practical clipped updates can only approximate and monitor rather than enforce exactly.
		Assume further that the update improves a TRPO-style surrogate objective by at least $\Delta_L>0$ in the induced
		stationary MDP with other agents fixed, and that the old-policy advantage is bounded by
		$\epsilon=\max_{s,a}|A_{\pi_{\text{old}}}(s,a)|$.
		Then
		\begin{equation}
			\label{eq:main_trpo_gain}
			J(\theta_k^{t+1},\theta_{-k}^{t}) - J(\theta_k^{t},\theta_{-k}^{t})
			\ \ge\
			\Delta_L - C(\gamma)\,\epsilon\,\sqrt{\delta},
		\end{equation}
		where \(C(\gamma)=\frac{2\gamma\sqrt{2}}{(1-\gamma)^2}\). In particular, if $\Delta_L \ge C(\gamma)\epsilon\sqrt{\delta}$ for each block update, then $J$ is non-decreasing and,
		since $J\in[0,1]$, the sequence of objective values converges.
	\end{theorem}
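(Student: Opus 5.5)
We reduce the statement to the classical TRPO monotonic-improvement bound of Schulman et al. (2015), applied to the single-agent MDP induced by freezing the other agents. \textbf{Step 1 (reduction).} With $\theta_{-k}^t$ fixed, the remaining agents' policies are stationary. In the sequential Think--Solve topology, agent $k$ therefore faces an ordinary MDP. Its states are the prompt plus the preceding transcript, and transitions fold in the fixed agents' sampling. The reward is the (counterfactual) credit signal. The objective $J(\theta_k,\theta_{-k}^t)$ is then the discounted return of $\pi_{\theta_k}$ in this induced MDP, with discount $\gamma<1$ (needed for $C(\gamma)$ to be finite). We also note that replacing the shared reward by the CCPO reward $R_{\mathrm{joint}}-R_{\neg k}$ only subtracts a term independent of agent $k$'s sampled action. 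This uses the independence of the solver-only counterfactual stated in Section~\ref{sec:ccpo_method}. Hence advantages and the policy-improvement direction are unchanged, and the same $J$ and $A_{\pi_{\text{old}}}$ may be used.

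\textbf{Step 2 (performance difference and surrogate).} We invoke the performance-difference identity
\[
J(\pi_{\text{new}})-J(\pi_{\text{old}})=\tfrac{1}{1-\gamma}\,\mathbb{E}_{s\sim d^{\pi_{\text{new}}},a\sim\pi_{\text{new}}}\bigl[A_{\pi_{\text{old}}}(s,a)\bigr].
\]
We define the surrogate $L_{\pi_{\text{old}}}(\pi_{\text{new}})$ by replacing $d^{\pi_{\text{new}}}$ with $d^{\pi_{\text{old}}}$. By assumption, $L_{\pi_{\text{old}}}(\pi_{\text{new}})-J(\pi_{\text{old}})\ge \Delta_L$.

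\textbf{Step 3 (error bound).} We use the standard coupling argument with $\alpha=\max_s D_{\mathrm{TV}}(\pi_{\text{old}}(\cdot|s),\pi_{\text{new}}(\cdot|s))$. It gives
\[
|J(\pi_{\text{new}})-L_{\pi_{\text{old}}}(\pi_{\text{new}})|\le \frac{4\gamma\epsilon\alpha^2}{(1-\gamma)^2}.
\]
A cruder first-order variant bounds the state-distribution mismatch, $\|d^{\pi_{\text{new}}}-d^{\pi_{\text{old}}}\|_1\le \tfrac{2\gamma\alpha}{1-\gamma}$, and then pays $\epsilon$ per unit of mismatch and $\alpha$ for the action mismatch. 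Since the stated constant is linear in $\sqrt{\delta}$, I would use this linear-in-$\alpha$ route, which yields an error of at most $\tfrac{2\gamma\epsilon\alpha}{(1-\gamma)^2}$. Pinsker's inequality gives $\alpha\le\sqrt{\delta/2}$. Recovering exactly $C(\gamma)=2\sqrt2\gamma/(1-\gamma)^2$ requires bookkeeping of the factor-of-two conventions, such as $\epsilon$ versus $2\epsilon$ for centered advantages or the TV normalization. If the bound comes out tighter, it still implies the stated one. Combining Steps 2--3 yields \eqref{eq:main_trpo_gain}.

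\textbf{Step 4 (monotonicity and convergence).} If $\Delta_L\ge C(\gamma)\epsilon\sqrt\delta$, each block update gives $J^{t+1}\ge J^t$. Since the joint objective is the same across blocks, the alternating sequence of objective values is non-decreasing. It is bounded above, since $J\in[0,1]$ after the normalization of the bounded reward. Monotone convergence then gives convergence of the values.

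The main obstacle is Step 1. We must make precise that the induced single-agent problem is a stationary MDP with the needed discounting, and that the CCPO baseline shift leaves the advantage unchanged. The constant-matching in Step 3 is routine but fiddly.
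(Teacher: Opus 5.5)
Your proposal is correct and follows essentially the paper's route: the induced single-agent MDP, a TRPO-style bound linear in $D_{\mathrm{TV}}^{\max}$ from performance difference plus occupancy perturbation, Pinsker, and monotone convergence. Your bound $\frac{2\gamma\epsilon\alpha}{(1-\gamma)^2}$ is half the paper's $\frac{4\gamma\epsilon\alpha}{(1-\gamma)^2}$, so it implies the stated inequality; the exact $C(\gamma)$ also follows from the quadratic TRPO bound you quoted via $\alpha^2\le\alpha$. The Step~1 baseline-shift digression is unnecessary and not accurate for the Solver's fused reward or the $\tanh$/EMA shaping, since the theorem concerns the joint objective $J$ and is agnostic to the reward allocator.
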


	\begin{remark}
		Theorem~\ref{prop:ccpo_tr_mono} is an idealized characterization: it assumes a KL-bounded trust region and a positive surrogate improvement for each block update.
		GRPO, GSPO, and PPO-like implementations approximate this condition through clipping or KL control, but clipping is not equivalent to an exact KL constraint and does not guarantee strict monotonic improvement in general.
		The theorem is therefore not a practical convergence guarantee for CCPO; it clarifies the conditions under which any conservative block update with CCPO rewards would inherit a standard trust-region lower bound.
		A formal derivation under the KL trust-region condition is also provided in Appendix~\ref{app:ccpo_convergence}.
	\end{remark}
	
	We next clarify why counterfactual credit is preferable to shared terminal rewards from a gradient-estimation perspective.
	Fix an active agent $k$ and hold the other agents $\theta_{-k}$ fixed.
	Let $R(\tau)\in[0,1]$ be the terminal joint reward for a joint rollout $\tau$, and let $R_{\neg k}$ be the counterfactual reward obtained by removing agent $k$ while keeping the remaining agents and the collaboration protocol unchanged.
	Define $\Delta_k := R(\tau)-R_{\neg k}$.
	Let
	\[
	g_k(\tau_k)\;:=\;\sum_t \nabla_{\theta_k}\log \pi_{\theta_k}(a_{k,t}\mid s_{k,t})
	\]
	denote agent $k$'s score-function term, where $(s_{k,t},a_{k,t})$ are token- or turn-level states/actions.
	
	Under the shared-reward baseline, agent~$k$ is trained with an estimator proportional to $g_k(\tau_k)\,R(\tau)$.
	In contrast, CCPO employs $g_k(\tau_k)\,\Delta_k$, which can be interpreted as subtracting an action-independent baseline $R_{\neg k}$ from the return.
	Theorem~\ref{prop:cf_vs_shared_main} below formalizes that such counterfactual baselines preserve unbiasedness and can reduce estimator variance only when the counterfactual is a better baseline than zero.

	\begin{theorem}
		\label{prop:cf_vs_shared_main}
		Assume that conditioned on $(x,\theta_{-k})$, the random variable $R_{\neg k}$ does not depend on agent $k$'s sampled actions in the joint rollout $\tau$.
		Then replacing $R(\tau)$ by $\Delta_k=R(\tau)-R_{\neg k}$ does not change $\nabla_{\theta_k}J(\boldsymbol{\theta})$ (no gradient bias). Moreover, consider the family of unbiased estimators of the form
		\[
		\widehat{G}_b \;=\; g_k(\tau_k)\,\bigl(R(\tau)-b\bigr),
		\]
		where $b$ is any scalar baseline measurable with respect to $(x,\theta_{-k})$ and independent of agent $k$'s sampled actions in $\tau$ (conditioned on $(x,\theta_{-k})$).
		Among all such baselines, the conditional variance
		$\mathrm{Var}(\widehat{G}_b\mid x,\theta_{-k})$
		is minimized by
		\begin{equation}
			\label{eq:opt_baseline_main}
			b^\star(x,\theta_{-k})
			\;=\;
			\frac{\mathbb{E}\!\left[\|g_k(\tau_k)\|_2^2\,R(\tau)\,\middle|\,x,\theta_{-k}\right]}
			{\mathbb{E}\!\left[\|g_k(\tau_k)\|_2^2\,\middle|\,x,\theta_{-k}\right]}.
		\end{equation}
		Consequently, the shared-reward estimator corresponds to the special case $b\equiv 0$, which is generally suboptimal unless $b^\star\equiv 0$. If $R_{\neg k}$ is closer to $b^\star$ than $0$ in the weighted mean-square sense, i.e.,
		$\mathbb{E}\!\left[\|g_k(\tau_k)\|_2^2\,(R_{\neg k}-b^\star)^2 \,\middle|\, x,\theta_{-k}\right]
		\le
		\mathbb{E}\!\left[\|g_k(\tau_k)\|_2^2\,(0-b^\star)^2 \,\middle|\, x,\theta_{-k}\right]$, then 
		\[\mathrm{Var}\!\left(g_k(\tau_k)\Delta_k\mid x,\theta_{-k}\right)\le \mathrm{Var}\!\left(g_k(\tau_k)R(\tau)\mid x,\theta_{-k}\right).
		\]
	\end{theorem}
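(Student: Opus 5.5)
The plan is the standard score-function baseline argument, carried out conditionally on $(x,\theta_{-k})$ throughout. I would first fix notation. Conditioned on $(x,\theta_{-k})$, agent $k$'s sampled trajectory $\tau_k$ has density $\pi_{\theta_k}(\tau_k\mid\cdot)$; the other agents' outputs and the evaluator supply the remaining randomness. The gradient of $J$ with respect to $\theta_k$ is then $\mathbb{E}[g_k(\tau_k)R(\tau)\mid x,\theta_{-k}]$ by the likelihood-ratio identity. Here $g_k$ collects only the $\theta_k$-dependent log-probabilities, because the other agents' factors do not depend on $\theta_k$.

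\textbf{Unbiasedness.} I would show $\mathbb{E}[g_k(\tau_k)R_{\neg k}\mid x,\theta_{-k}]=0$. By the independence assumption, $R_{\neg k}$ is independent of agent $k$'s actions given $(x,\theta_{-k})$, so the expectation factorizes as $\mathbb{E}[R_{\neg k}\mid\cdot]\,\mathbb{E}[g_k(\tau_k)\mid\cdot]$. The second factor vanishes because
\[
\mathbb{E}[g_k]=\sum_{\tau_k}\nabla_{\theta_k}\pi_{\theta_k}(\tau_k)=\nabla_{\theta_k}1=0 .
\]
In the sequential topology I would apply this turn by turn via the tower property: at each step, conditioning on $s_{k,t}$ gives $\mathbb{E}[\nabla\log\pi(a_{k,t}\mid s_{k,t})\mid s_{k,t}]=0$, while $R_{\neg k}$ is independent of $a_{k,t}$. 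The same argument covers any admissible scalar $b$, so every $\widehat G_b$ is unbiased.

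\textbf{Optimal baseline.} Because all $\widehat G_b$ share the mean $\nabla J$, minimizing the variance (taken as the trace of the covariance) is the same as minimizing the second moment $\mathbb{E}\|g_k\|^2(R-b)^2$. For a constant $b$ given $(x,\theta_{-k})$ this is a convex quadratic in $b$. Setting its derivative to zero yields $b^\star$ in \eqref{eq:opt_baseline_main}.

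\textbf{Comparison step.} This is the step I expect to need the most care. The key is the weighted-orthogonality identity
\[
\mathbb{E}\bigl[\|g_k\|^2(R-b)^2\bigr]=\mathbb{E}\bigl[\|g_k\|^2(R-b^\star)^2\bigr]+\mathbb{E}\bigl[\|g_k\|^2(b-b^\star)^2\bigr].
\]
It holds because the cross term $\mathbb{E}[\|g_k\|^2(R-b^\star)](b^\star-b)$ vanishes by the definition of $b^\star$. For $b=R_{\neg k}$, which is random, the cross term is $\mathbb{E}[\|g_k\|^2(R-b^\star)(b^\star-R_{\neg k})]$. I would factor it using the independence of $R_{\neg k}$ from $\tau_k$. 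Two points are delicate here. First, this factorization also needs $R$ and $R_{\neg k}$ to decouple given $(x,\theta_{-k})$, as when the counterfactual is sampled independently. Second, $\|g_k\|^2$ depends on $\tau_k$, so independence from $\tau_k$ alone does not make $\|g_k\|^2$ and $R_{\neg k}$ uncorrelated.

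If these conditions hold, the cross term vanishes. The second moment for $b=R_{\neg k}$ is then $\mathbb{E}\|g_k\|^2(R-b^\star)^2+\mathbb{E}\|g_k\|^2(R_{\neg k}-b^\star)^2$, and for $b=0$ it is $\mathbb{E}\|g_k\|^2(R-b^\star)^2+\mathbb{E}\|g_k\|^2 (b^\star)^2$. The hypothesis of the theorem compares exactly the two final terms. Subtracting the common squared mean then gives the variance inequality.

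If the cross term does not vanish outright, I would fall back on conditional independence of $R_{\neg k}$ from $(\tau_k,R)$ given $(x,\theta_{-k})$. I would state this explicitly as the reading of the independence assumption.
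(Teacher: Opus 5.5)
Your proposal is correct and follows essentially the same route as the paper: the zero-mean score gives unbiasedness, minimizing the convex weighted second moment $\mathbb{E}[\|g_k\|_2^2(R-b)^2\mid x,\theta_{-k}]$ gives $b^\star$, and the orthogonal decomposition around $b^\star$ compares $b=R_{\neg k}$ with $b=0$. The paper kills the cross term simply by treating $R_{\neg k}$ as $(x,\theta_{-k})$-measurable, i.e.\ constant given the conditioning, so your explicit assumption that $R_{\neg k}$ is conditionally independent of the pair $(\tau_k,R)$ is the right way to handle a genuinely random counterfactual, and it holds in Think--Solve because the solo rollout is sampled independently of the whole joint rollout; one aside is misstated, though: independence from $\tau_k$ does make $R_{\neg k}$ independent of $\|g_k\|_2^2$ (a function of $\tau_k$), and the real obstruction is the factor $R$.
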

	
	We do not assume this variance condition always holds in practice; it motivates the estimator but remains task- and counterfactual-quality dependent. In addition to this conditional variance-reduction view, $\Delta_k$ also provides a directional credit signal that discourages free-riding:
	whenever agent $k$ is redundant on a rollout so that $R(\tau)=R_{\neg k}$, CCPO assigns $\Delta_k=0$ and thus removes the spurious positive update that would arise under shared rewards.
	Formal statements and proofs for Theorem~\ref{prop:cf_vs_shared_main} are deferred to Appendix~\ref{app:ccpo_convergence}.


	\section{Experiments}
	\subsection{Experimental Setup}
	We evaluate CCPO and SEPO as credit-assignment modules in the two-agent Think--Solve topology from Section~\ref{sec:ccpo_method}: the Thinker generates an intermediate reasoning trace and the Solver produces the final answer. This setting directly tests whether a sparse final verifier signal can be converted into useful role-specific rewards. We train on MATH 7.5k \citep{hendrycks2021measuringmathematicalproblemsolving} and report exact-match accuracy on MATH500 (in-distribution), AIME25, AMC23, Gaokao2023en \citep{zhang2024evaluatingperformancelargelanguage}, and MinervaMath \citep{lewkowycz2022solvingquantitativereasoningproblems}.
	
	The main GRPO comparison uses the same protocol, data split, verifier, and base optimizer for an untrained collaborative policy, the shared-reward implementation of ReMA \cite{wan2503rema}, and GRPO trained with CCPO rewards. The GSPO study keeps the optimizer fixed and compares shared rewards, CCPO rewards, and SEPO rewards, so that differences come from the credit allocator rather than the optimizer. These experiments are not matched-compute comparisons because CCPO requires extra verifier calls. All experiments were run on 6 NVIDIA A800 GPUs; hyperparameters are provided in Table~\ref{tab:hyperparameters}.

	\begin{table}[!ht]
		\centering
		\caption{Dual-agent reasoning performance under the GRPO optimizer with different credit signals on mathematical benchmarks (Accuracy \%). } 
		\label{tab:main_results}
		\small
		\setlength{\tabcolsep}{0.6pt}
		\renewcommand{\arraystretch}{1.08}
		\begin{tabular}{@{}>{\centering\arraybackslash}p{0.13\linewidth}>{\centering\arraybackslash}p{0.25\linewidth}>{\centering\arraybackslash}p{0.18\linewidth}>{\centering\arraybackslash}p{0.14\linewidth}>{\centering\arraybackslash}p{0.26\linewidth}@{}}
			\toprule[1.1pt]
			\textbf{Model} & \textbf{Dataset} & \begin{tabular}[c]{@{}c@{}}\textbf{Untrained}\end{tabular} & \begin{tabular}[c]{@{}c@{}}\textbf{Shared}\end{tabular} & \begin{tabular}[c]{@{}c@{}}\textbf{CCPO}\end{tabular} \\ \midrule
			
			\multirow{5}{*}{\begin{tabular}[c]{@{}c@{}}qwen2.5\\1.5b\\instruct\end{tabular}} 
			& MATH500 & 54.00 & 60.00 & \textbf{61.00} \\
			& AIME25 & \textbf{-} & - & - \\
			& AMC23 & 32.50 & 20.00 & \textbf{35.00} \\
			& Gaokao2023 & 41.04 & \textbf{43.90} & 41.60 \\
			& MinervaMath & 12.50 & 13.60 & \textbf{14.70} \\
			\midrule
			\multirow{5}{*}{\begin{tabular}[c]{@{}c@{}}llama3.1\\8b\\instruct\end{tabular}} 
			& MATH500 & 46.20 & 51.80 & \textbf{53.40} \\
			& AIME25 & - & - & - \\
			& AMC23 & 22.50 & 25.00 & \textbf{35.00} \\
			& Gaokao2023 & 37.92 & 38.70 & \textbf{39.48} \\
			& MinervaMath & 17.28 & 16.54 & \textbf{20.22} \\ 
			\midrule
			\multirow{5}{*}{\begin{tabular}[c]{@{}c@{}}qwen2.5\\7b\\instruct\end{tabular}} 
			& MATH500 & 74.40 & 75.40 & \textbf{77.60} \\
			& AIME25 & 6.670 & \textbf{13.30} & 10.00 \\
			& AMC23 & 55.00 & \textbf{57.50} & 45.00 \\
			& Gaokao2023 & 57.40 & 57.70 & \textbf{59.74} \\
			& MinervaMath & 25.74 & 24.60 & \textbf{26.10} \\ 
			\midrule
			\multirow{5}{*}{\begin{tabular}[c]{@{}c@{}}qwen3\\4b\\base\end{tabular}} 
			& MATH500 & 46.40 & 78.00 & \textbf{79.40} \\
			& AIME25 & 6.670 & \textbf{13.33} & 6.670 \\
			& AMC23 & 25.00 & 42.50 & \textbf{52.50} \\
			& Gaokao2023 & 22.08 & \textbf{58.70} & 57.40 \\
			& MinervaMath & 5.510 & 26.84 & \textbf{27.94} \\ 
			\bottomrule[1.1pt]
		\end{tabular}
	\end{table}

	\begin{table}[t]
		\centering
		\caption{Reward-construction comparison under the fixed GSPO optimizer.}
		\label{tab:reward_comparison}
		\small
		\setlength{\tabcolsep}{1.0pt}
		\renewcommand{\arraystretch}{1.08}
		\begin{tabular}{@{}>{\centering\arraybackslash}p{0.17\linewidth}>{\centering\arraybackslash}p{0.22\linewidth}>{\centering\arraybackslash}p{0.11\linewidth}>{\centering\arraybackslash}p{0.13\linewidth}>{\centering\arraybackslash}p{0.19\linewidth}>{\centering\arraybackslash}p{0.15\linewidth}@{}}
			\toprule[1.1pt]
			\textbf{Model} 
			& \textbf{Dataset} 
			& \begin{tabular}[c]{@{}c@{}}\textbf{Un-}\\\textbf{trained}\end{tabular}
			& \begin{tabular}[c]{@{}c@{}}\textbf{Shared}\end{tabular}
			& \begin{tabular}[c]{@{}c@{}}\textbf{CCPO}\end{tabular}
			& \begin{tabular}[c]{@{}c@{}}\textbf{SEPO}\end{tabular} \\
			\midrule
			\multirow{5}{*}{\begin{tabular}[c]{@{}c@{}}qwen2.5\\1.5b\\instruct\end{tabular}}
			& MATH500      & 54.00 & 57.80   & \textbf{59.20}   & \textbf{59.20} \\
			& AIME25       & - & \textbf{6.670} & \textbf{6.670} & \textbf{6.670} \\
			& AMC23        & 32.50 & \textbf{42.50} & \textbf{42.50} & 35.00 \\
			& Gaokao2023   & 41.04 & 46.23 & \textbf{47.01} & 46.49 \\
			& MinervaMath  & 12.50 & 16.18 & 15.44 & \textbf{16.91} \\
			\midrule
			\multirow{5}{*}{\begin{tabular}[c]{@{}c@{}}olmo3\\7b\\instruct\end{tabular}}
			& MATH500      & 82.60 & \textbf{87.00} & 84.40 & \textbf{87.00} \\
			& AIME25       & \textbf{30.00} & \textbf{30.00} & \textbf{30.00} & \textbf{30.00} \\
			& AMC23        & 77.50 & 82.50 & 80.00 & \textbf{85.00} \\
			& Gaokao2023   & 72.73 & 73.25 & \textbf{73.77} & 73.25 \\
			& MinervaMath  & 27.57 & 27.91 & \textbf{30.15} & 26.47 \\
			\bottomrule[1.1pt]
		\end{tabular}
	\end{table}

	\begin{table}[t]
		\centering
		\caption{Performance (\%) of heterogeneous LLM collaboration under GSPO with different credit signals. GroupA denotes olmo3-7b-instruct\&qwen2.5-1.5b-instruct.}
		\label{tab:heterogeneous_gspo_results}
		\small
		\setlength{\tabcolsep}{3pt}
		\renewcommand{\arraystretch}{1.08}
		\resizebox{\linewidth}{!}{
			\begin{tabular}{@{}llcccc@{}}
				\toprule[1.1pt]
				\textbf{Model} & \textbf{Dataset} & \textbf{Untrained} & \textbf{Shared} & \textbf{CCPO} & \textbf{SEPO} \\
				\midrule
				\multirow{5}{*}{GroupA} & MATH500 & 73.60 & 87.20 & 87.00 & \textbf{87.40} \\
				& AIME25 & 20.00 & 33.33 & \textbf{36.67} & 33.33 \\
				& AMC23 & 52.50 & 72.50 & 80.00 & \textbf{82.50} \\
				& Gaokao2023en & 55.32 & \textbf{69.09} & 68.31 & \textbf{69.09} \\
				& MinervaMath & 20.22 & 27.94 & \textbf{30.15} & 25.37 \\
				\bottomrule[1.1pt]
			\end{tabular}
		}
	\end{table}
	
	\begin{table}[h]
		\centering
		\caption{Ablation of the Think--Solve handoff.} 
		\resizebox{\linewidth}{!}{
			\renewcommand{\arraystretch}{1}
			\begin{tabular}{@{}lcc@{}}
				\toprule[1.1pt]
				\textbf{Model} & \textbf{Full collaboration} & \textbf{Agent 1 removed} \\
				\midrule
				qwen2.5-1.5b-instruct & 61.00  & 56.40 \\
				llama3.1-8b-instruct  &  53.40     & 52.00      \\
				qwen2.5-7b-instruct   &  77.60     & 76.00      \\
				\bottomrule[1.1pt]
			\end{tabular}
		}
		\label{tab:handoff_ablation}
	\end{table}
	
	\subsection{GRPO with Counterfactual Credit}
	\cref{tab:main_results} shows that counterfactual credit improves over shared reward on MATH500 for all reported base models and is often beneficial on OOD benchmarks such as AMC23 and MinervaMath. This suggests that estimating the Thinker's marginal contribution can help when the two roles provide separable information. The gains are not uniform: shared reward remains competitive on several AIME25, AMC23, and Gaokao2023 settings, especially when the Solver can already solve many examples from the prompt alone. We therefore view the results as evidence that role-specific credit is useful for collaborative training, rather than as a claim that one allocator dominates every model--dataset pair.

	\subsection{GSPO with Different Credit Signals}
	
	\cref{tab:reward_comparison} shows that CCPO and SEPO rewards can both be used by the same GSPO optimizer. SEPO rewards are competitive for qwen2.5-1.5b-instruct and perform best on AMC23 for olmo3-7b-instruct, while CCPO rewards are strongest on Gaokao2023 and MinervaMath for olmo3-7b-instruct. This supports the optimizer-agnostic view of the proposed credit signals: once the joint outcome is mapped to role-specific rewards, the resulting signal can be consumed by different policy-gradient optimizers. Overall, with GSPO held fixed, both counterfactual and self-evaluation-based credit assignment outperform the shared-reward baseline in terms of accuracy. These gains are consistent across most mathematical reasoning scenarios.
	\begin{figure*}[!t]
		\centering
		\includegraphics[width=0.99\textwidth]{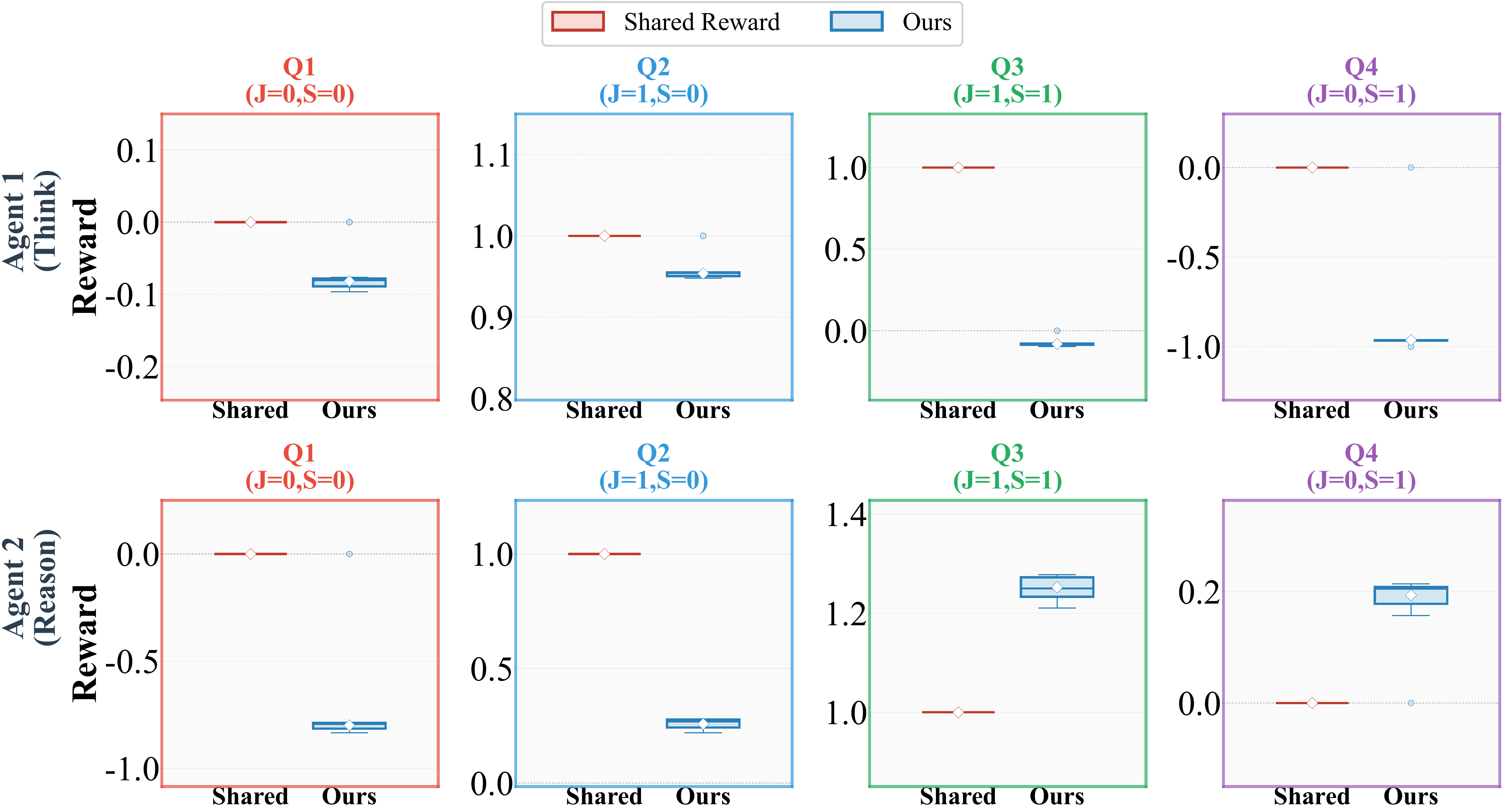}
		\caption{Reward distributions under shared rewards and CCPO credit assignment. The first row shows the Thinker and the second row shows the Solver; CCPO rewards separate contribution patterns more clearly than shared rewards (J: joint-answer score; S: Solver-only score).}
		\label{fig:boxPic}
		\vspace{-2mm}
	\end{figure*}
	\subsection{Performance on Heterogeneous LLMs}
	
	We further evaluate the credit signals in a heterogeneous Think--Solve setting trained with GSPO, where the two agents use different base LLMs. \cref{tab:heterogeneous_gspo_results} shows that explicit credit assignment remains competitive in this setting. Self-evaluation performs best on MATH500 and AMC23, while counterfactual credit is strongest on AIME25 and MinervaMath.

	\subsection{Collaboration and Credit Diagnostics}
	\subsubsection{Think--Solve Handoff}
	To check whether the Solver uses the Thinker's message, we remove Agent~1 at inference time. \cref{tab:handoff_ablation} shows that full collaboration is better for all reported models, suggesting that the learned Solver still benefits from the handoff. This diagnostic helps rule out the degenerate case in which training only improves a standalone Solver that ignores the collaborative trace.
	
	\subsubsection{Computational Trade-off}
	\begin{figure}[htbp]
		\centering
		\includegraphics[width=0.5\textwidth]{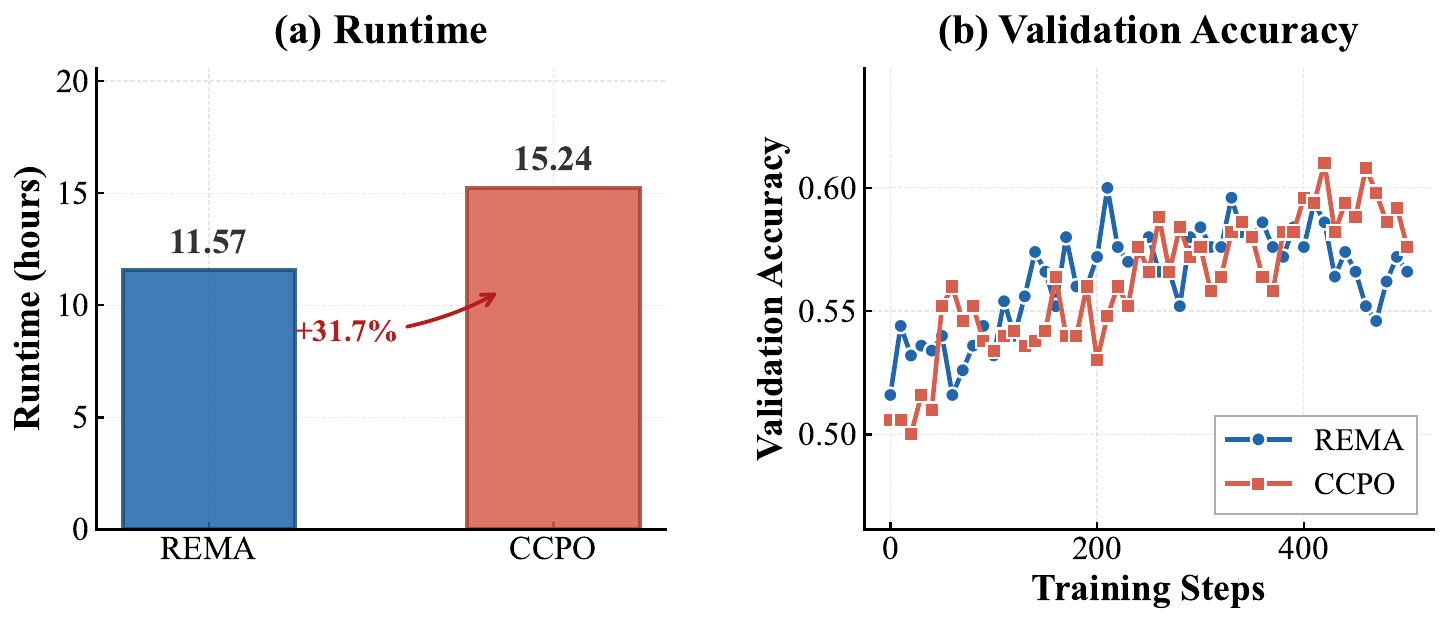}
		\caption{Training efficiency and validation accuracy of GRPO with CCPO rewards versus GRPO with shared rewards.}
		\label{fig:compute_tradeoff}
	\end{figure}
	
	CCPO requires extra verifier calls to estimate leave-one-role-out outcomes. \cref{fig:compute_tradeoff} reports the resulting cost--accuracy trade-off against shared-reward GRPO under a one-epoch budget. The figure indicates the practical overhead of counterfactual evaluation, so we treat it as a cost diagnostic rather than a compute-normalized superiority claim.

	\subsubsection{Reward Distribution}
	\cref{fig:boxPic} illustrates that shared rewards assign identical signals to both roles, whereas CCPO rewards better separate helpful, redundant, and harmful Thinker contributions. This visualization provides qualitative support for the intended behavior of the credit allocator: positive updates are concentrated on cases where the removed role changes the verifier outcome.

	\section{Conclusion}
	We present CCPO and SEPO, two optimizer-agnostic credit assignment methods for collaborative LLM training under sparse joint rewards. CCPO uses counterfactual outcomes to estimate marginal contribution, while SEPO uses verifier-anchored self and peer evaluations to redistribute credit. They operate at the reward-construction layer and can be paired with policy optimizers such as GRPO and GSPO. Experiments on mathematical reasoning benchmarks show that explicit credit assignment can improve over shared-reward training, especially on MATH500 and selected out-of-distribution benchmarks, while some datasets still favor shared rewards. These results highlight credit assignment as a key design axis for collaborative LLM systems and motivate future extensions to richer interaction graphs, stronger baselines, and process-level rewards.

	\section*{Limitations}
	CCPO and SEPO are evaluated on mathematical reasoning benchmarks with a two-agent Think--Solve topology. Extending them to richer interaction graphs, longer multi-turn collaboration, process-level rewards, and unreliable or adversarial collaborators remains future work. Our experiments report single-run exact-match accuracy and focus on shared-reward baselines, so small differences and comparisons to other credit-assignment alternatives should be interpreted cautiously. CCPO also requires additional verifier calls, and SEPO depends on well-calibrated self/peer rubrics.
	
	\section*{Ethics Statement}
	This paper develops credit assignment methods for collaborative LLM training. The main risks are those associated with stronger automated reasoning and coordination, including misuse in complex task automation or persuasive generation. Because self/peer scoring can be noisy or exploitable, reliable external verification, calibrated rubrics, and transparent role attribution are important safeguards before deployment beyond controlled reasoning benchmarks.
	
	\bibliography{custom}
	\newpage
	\appendix
	\onecolumn

	\section{Related Work}
	\label{app:related_work}
	
	\subsection{Credit Assignment}
	
	Credit assignment is a core challenge in multi-agent RL, particularly with sparse or delayed joint rewards where agent behaviors are tightly coupled \cite{wang2025xmix,fu2024closely}. A standard paradigm is centralized training with decentralized execution (CTDE), which leverages global information during learning while maintaining independent policies at deployment \cite{li2021shapley}. Prior methods largely fall into two lines. Value decomposition approaches, e.g., QMIX \citep{rashid2020weighted}, learn structured mappings that combine per-agent utilities into a team value, but they typically rely on monotonicity or factorization assumptions that are hard to justify for language collaboration, where an agent's message can have non-monotone and highly context-dependent effects. Counterfactual approaches estimate marginal contributions by comparing realized outcomes to hypothetical alternatives. Shapley-value formulations are principled but often computationally prohibitive due to coalitional evaluation. Related directions, such as reward redistribution across agents and time \cite{kapoor2024agent} or curriculum-based counterfactual advantages \cite{jin2025curriculum}, are not tailored to long, discrete LLM generation trajectories and typically require extra rollouts or repeated reward evaluations.
	
	These limitations motivate a practical tension in collaborative LLM training: counterfactual credit is desirable for resolving free-riding, yet naive counterfactual estimation is too expensive for long-sequence generation. CCPO resolves this tension by constructing lightweight, role- and topology-aware counterfactual baselines that can often be computed within the same sampling instance (or with minimal additional decoding), making counterfactual credit allocation feasible under strict generation budgets.
	
	\subsection{Reinforcement Learning for LLMs}
	
	RL has become a standard paradigm for aligning LLMs on reasoning, generation, and preference tasks \cite{zhang2026heterogeneous,huang2026does,huang2025adaptive,lu2026contextual,huang2026real}. Classic policy-gradient methods (e.g., REINFORCE \citep{williams1992simple} and Actor--Critic) optimize expected return but can suffer from high variance, while trust-region methods such as TRPO \citep{schulman2017trustregionpolicyoptimization} and PPO \citep{schulman2017proximalpolicyoptimizationalgorithms} stabilize training via constrained updates. For LLM fine-tuning, training an explicit critic is often costly and unstable, motivating critic-free approaches. GRPO estimates advantages from group statistics over multiple sampled responses, eliminating the need for a value network.
	HA-DW \citep{yang2026your} provides a principled theoretical analysis of group-based advantage estimation.
	Several extensions further improve robustness and efficiency, including GSPO, DAPO \citep{yu2025dapo}, FSPO \citep{mao2025clipsequencesfairlyenforcing}, and REINFORCE++, which refine importance weighting, clipping, or normalization for sequence-level optimization.
	
	Most of this literature focuses on single-policy optimization, where advantage estimation is the primary concern. In contrast, collaborative LLMs introduce an additional structural challenge: the reward is shared by a \emph{team}, but learning should be driven by \emph{agent-specific} signals that reflect marginal influence. CCPO and SEPO target this missing piece by transforming a shared joint reward into role-specific credit signals that can be fed into GRPO, GSPO, REINFORCE++, or other policy-gradient optimizers without requiring expensive critics or repeated re-rollouts.
	
	\subsection{Multi-Agent LLM Training}
	
	Multi-agent collaboration has been shown to improve LLM reasoning via test-time interaction mechanisms such as debate, critique, role assignment, and iterative refinement \cite{chen2025llmboostmakelargelanguage,zou2025transformer}. While effective at inference, these methods typically do not internalize collaborative behaviors into model parameters.
	
	Recent work has shifted toward training-time multi-agent learning \cite{chen2026weak,chen2025scoring}. ILR \cite{lin2025interactive}, MAPoRL \cite{park2025maporl}, MAGRPO \cite{liu2025llm}, and ReMA \cite{wan2503rema} propose multi-agent RL objectives to encourage cooperation, often relying on shared rewards or hierarchically structured signals. MARFT \cite{liao2025marft} generalizes multi-agent fine-tuning via parameter-efficient adaptation. In parallel, structured role-based systems such as MALT \cite{motwani2024malt} and CORY \cite{ma2024coevolving} design explicit pipelines or role-rotation schemes to improve coordination.
	
	Despite these advances, many existing approaches still adopt implicit credit sharing: agents are updated from a common team signal or from task-specific heuristics that do not explicitly isolate marginal causality, leaving free-riding and negative contributions largely unresolved. Our perspective is to treat credit assignment as the first-class bottleneck for collaborative LLM training. Unlike COMA-style counterfactual baselines, which are typically formulated around centralized critics and discrete action replacement, CCPO constructs task-level textual counterfactuals by removing or nulling an agent's message and re-evaluating the external verifier. Unlike Shapley-style attribution, it avoids enumerating coalitions and instead uses a lightweight leave-one-role-out comparison suited to long LLM generations, while acknowledging that leave-one-role-out is not a full Shapley estimator and can be biased when interactions are highly nonlinear. SEPO complements this counterfactual route by using structured self- and peer-evaluation as a bounded verifier-anchored credit signal. Compared with shared-reward multi-agent LLM training, our goal is not to introduce a new base optimizer, but to provide role-specific rewards that can be plugged into existing optimizers.

	\section{Theoretical Details for Counterfactual Credit}
	\label{app:ccpo_convergence}
	
	This section formalizes two properties used to support the idealized discussion in the main text.
	We first show that counterfactual credit can be viewed as an action-independent baseline for the active agent, and then state a TRPO-style lower bound for conservative block updates.
	As in prior work, PPO/GRPO are practical approximations to trust-region methods and do not guarantee strict monotonic improvement in general.
	
	Let $x\sim\mathcal{D}$ be a prompt and let $\boldsymbol{\theta}=(\theta_1,\dots,\theta_K)$ denote the parameters of $K$ agents with policies $\{\pi_{\theta_k}\}_{k=1}^K$ under a fixed collaboration protocol $\mathcal{C}$.
	Let $\tau\sim \pi_{\boldsymbol{\theta}}(\cdot\mid x)$ be the joint rollout and let $R(\tau)\in[0,1]$ be a terminal reward (e.g., $0/1$ correctness).
	The joint objective is
	\begin{equation}
		J(\boldsymbol{\theta})
		=
		\mathbb{E}_{x\sim\mathcal{D}}
		\ \mathbb{E}_{\tau\sim \pi_{\boldsymbol{\theta}}(\cdot\mid x)}
		\bigl[R(\tau)\bigr].
		\label{eq:app_joint_obj}
	\end{equation}
	
	For each agent $k$, define a counterfactual rollout $\tau^{\neg k}$ constructed by removing agent $k$ while keeping the other agents and protocol $\mathcal{C}$ unchanged, and define the counterfactual reward
	\begin{equation}
		R_{\neg k}:=R(\tau^{\neg k}).
		\label{eq:app_cf_reward}
	\end{equation}
	The counterfactual margin is
	\begin{equation}
		\Delta_k := R(\tau)-R_{\neg k}.
		\label{eq:app_delta}
	\end{equation}
	
	\subsection{Counterfactual credit as an action-independent baseline}
	The following Lemma~\ref{lem:app_cf_unbiased} shows that, under a mild conditional-independence assumption, using the counterfactual baseline $R_{\neg k}$ yields an unbiased policy-gradient estimator for agent~$k$, i.e., replacing $R(\tau)$ with $R(\tau)-R_{\neg k}$ does not introduce gradient bias.
	
	\begin{lemma}
		\label{lem:app_cf_unbiased}
		Fix an agent $k$ and hold $\theta_{-k}$ fixed.
		Assume that conditioned on $(x,\theta_{-k})$, the random variable $R_{\neg k}$ is independent of agent $k$'s sampled actions in the joint rollout $\tau$ (equivalently, $R_{\neg k}$ is measurable with respect to randomness external to agent $k$ in the current rollout).
		Then
		\begin{equation}
			\nabla_{\theta_k} J(\boldsymbol{\theta})
			=
			\mathbb{E}\!\left[
			\sum_{t}
			\nabla_{\theta_k}\log\pi_{\theta_k}(a_{k,t}\mid s_{k,t})\,
			\Delta_k
			\right],
			\label{eq:app_cf_pg}
		\end{equation}
		where $a_{k,t}$ and $s_{k,t}$ denote the (token-level or turn-level) action and state of agent $k$.
		Consequently, replacing $R(\tau)$ by $R(\tau)-R_{\neg k}$ does not introduce gradient bias for agent $k$.
	\end{lemma}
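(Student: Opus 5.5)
The argument is the standard baseline-subtraction one for the score function. We write the estimator as $\mathbb{E}[g_k(\tau_k)R(\tau)] - \mathbb{E}[g_k(\tau_k)R_{\neg k}]$. We then show two things: the first term equals $\nabla_{\theta_k}J(\boldsymbol{\theta})$, and the second term vanishes under the conditional-independence assumption. All expectations are taken conditionally on $(x,\theta_{-k})$ and then averaged over $x\sim\mathcal{D}$, which commutes with $\nabla_{\theta_k}$ because $\mathcal{D}$ does not depend on $\boldsymbol{\theta}$.

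\textbf{Step 1: the first term.} Factorize the joint rollout density under protocol $\mathcal{C}$ as a product of agent $k$'s token- or turn-level factors $\pi_{\theta_k}(a_{k,t}\mid s_{k,t})$ and factors from the other agents and the environment. Those other factors do not depend on $\theta_k$, even though the states $s_{k,t}$ may depend on other agents' earlier outputs, as in the sequential topology. The log-derivative trick then gives $\nabla_{\theta_k}\mathbb{E}[R(\tau)\mid x]=\mathbb{E}[g_k(\tau_k)R(\tau)\mid x]$. Exchanging gradient and expectation is harmless: $R\in[0,1]$ is bounded, and the action space is finite (vocabulary sequences of bounded length), so the expectation is a finite sum.

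\textbf{Step 2: the second term.} By assumption, conditionally on $(x,\theta_{-k})$, $R_{\neg k}$ is independent of agent $k$'s actions $\tau_k$. Hence $\mathbb{E}[g_k(\tau_k)R_{\neg k}\mid x]=\mathbb{E}[R_{\neg k}\mid x]\,\mathbb{E}[g_k(\tau_k)\mid x]$. It remains to show $\mathbb{E}[g_k(\tau_k)\mid x]=0$. Summing over $t$ and using the tower property, each term satisfies
\[
\mathbb{E}\bigl[\nabla_{\theta_k}\log\pi_{\theta_k}(a_{k,t}\mid s_{k,t})\mid s_{k,t}\bigr]=\nabla_{\theta_k}\sum_a\pi_{\theta_k}(a\mid s_{k,t})=\nabla_{\theta_k}1=0.
\]
Equivalently, $g_k$ is the score of the full trajectory density with respect to $\theta_k$, and a score has zero mean.

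\textbf{Main obstacle.} The delicate point is the factorization in Step 2 when agent $k$'s actions are interleaved with other agents' actions that react to them. Here independence must be understood with respect to the whole block $\tau_k$, not per token. This is exactly what the assumption supplies: in Think--Solve, the solver-only counterfactual is sampled with fresh randomness and never sees $y_1$. If independence held only marginally, the product step would fail, so the proof should state explicitly that we factor $\mathbb{E}[g_kR_{\neg k}]$ through conditional independence given $(x,\theta_{-k})$. Subtracting Step 2 from Step 1 yields \eqref{eq:app_cf_pg}, and hence there is no gradient bias.
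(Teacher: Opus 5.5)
Your route is the paper's: the log-derivative trick for the $R(\tau)$ term, then the $R_{\neg k}$ term vanishes by independence plus the zero-mean score. Your Step 1 is, if anything, more careful than the paper's. The problem is the ``Hence'' in Step 2. The factorization $\mathbb{E}[g_k R_{\neg k}\mid x]=\mathbb{E}[R_{\neg k}\mid x]\,\mathbb{E}[g_k\mid x]$ needs $R_{\neg k}$ to be independent of $g_k$, not merely of agent $k$'s actions. But $g_k=\sum_t\nabla_{\theta_k}\log\pi_{\theta_k}(a_{k,t}\mid s_{k,t})$ also depends on the states. As you say yourself in Step 1, these can contain other agents' outputs, for example $y_{1:k-1}$ for agent $k$ in the sequential topology. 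So your ``main obstacle'' paragraph is wrong when it says the hypothesis covers this case.

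Here is a concrete counterexample. Take $K=2$, $k=2$, $y_1$ a uniform bit, and $\pi_{\theta_2}(y_2=1\mid y_1)=\sigma\bigl((1+y_1)\theta_2\bigr)$ with $\sigma$ the logistic function, evaluated at $\theta_2=0$. Then $y_2$ is a uniform bit independent of $y_1$. Hence $R_{\neg 2}:=\mathbf{1}[y_1=y_2]$ is independent of agent 2's action. Yet $g_2=(1+y_1)(y_2-\tfrac12)$, and $\mathbb{E}[g_2R_{\neg 2}]=\tfrac14(-\tfrac12+1)=\tfrac18\neq 0$. The paper's proof has the same hole: it conditions on $b$ and asserts that the score still has zero conditional mean. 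So this is not a departure from the paper, but it is a genuine gap in the argument as written.

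To repair it, use the parenthetical form of the hypothesis: $R_{\neg k}$ is a function of $x$ and of randomness $\xi$ external to agent $k$. Then apply the tower property per token \emph{before} factoring anything. A global product formula is the wrong tool even under this form, because $R_{\neg k}$ and $g_k$ can still be dependent; $R_{\neg 2}=y_1$ in the example above is one case. Agent $k$'s sampling noise at step $t$ is independent of everything that determines $(s_{k,t},\xi)$. So, conditionally on $(s_{k,t},\xi)$, the action $a_{k,t}$ is still distributed as $\pi_{\theta_k}(\cdot\mid s_{k,t})$. Hence $\mathbb{E}[\nabla_{\theta_k}\log\pi_{\theta_k}(a_{k,t}\mid s_{k,t})\,R_{\neg k}]=\mathbb{E}\bigl[R_{\neg k}\,\mathbb{E}[\nabla_{\theta_k}\log\pi_{\theta_k}(a_{k,t}\mid s_{k,t})\mid s_{k,t},\xi]\bigr]=0$ for every $t$, in any topology.

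Alternatively, restrict to the case the paper actually uses, the Thinker $k=1$. There $s_{1,t}=(x,y_{1,<t})$, so $g_1$ is a function of $(x,y_1)$ alone, and your factorization is valid as written.
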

	
	\begin{proof}
		Write $b(x,\theta_{-k}):=R_{\neg k}$.
		By assumption, $b$ does not depend on agent $k$'s sampled actions in the current rollout.
		Using the log-derivative trick,
		\[
		\nabla_{\theta_k} J(\boldsymbol{\theta})
		=
		\mathbb{E}\!\left[
		\sum_t \nabla_{\theta_k}\log\pi_{\theta_k}(a_{k,t}\mid s_{k,t})\, R(\tau)
		\right].
		\]
		It remains to show that subtracting $b$ does not change the expectation.
		Conditioning on $(x,\theta_{-k},b)$ and using that $b$ is independent of $\tau_k$,
		\begin{align*}
			\mathbb{E}\!\left[
			\sum_t \nabla_{\theta_k}\log\pi_{\theta_k}(a_{k,t}\mid s_{k,t})\, b
			\right]
			&=
			\mathbb{E}\!\left[
			b \cdot \mathbb{E}\!\left[\nabla_{\theta_k}\log p_{\theta_k}(\tau_k\mid x,\theta_{-k}) \,\middle|\, x,\theta_{-k},b\right]
			\right] \\
			&=
			\mathbb{E}\!\left[
			b \cdot 0
			\right]
			=0,
		\end{align*}
		since the conditional expectation of the score $\nabla_{\theta_k}\log p_{\theta_k}(\tau_k\mid x,\theta_{-k})$ is zero.
		Therefore,
		\[
		\mathbb{E}\!\left[
		\sum_t \nabla_{\theta_k}\log\pi_{\theta_k}(a_{k,t}\mid s_{k,t})\, R(\tau)
		\right]
		=
		\mathbb{E}\!\left[
		\sum_t \nabla_{\theta_k}\log\pi_{\theta_k}(a_{k,t}\mid s_{k,t})\, (R(\tau)-b)
		\right],
		\]
		which yields Eq.~\eqref{eq:app_cf_pg}.
	\end{proof}
	
	\subsection{From bounded ratios to an effective trust region}
	
	In practice, the CCPO experiments construct role-specific rewards and then update each agent using GRPO with clipped importance ratios; we optionally monitor the empirical KL divergence to avoid overly large policy shifts.
	These mechanisms motivate an effective trust-region interpretation, but clipping alone does not imply a strict KL bound for practical sequence-level updates.
	The following Lemma \ref{lem:clip_kl} provides a sufficient condition linking idealized uniform ratio bounds to a per-state KL upper bound.
	
	\begin{lemma}
		\label{lem:clip_kl}
		Fix a state $s$ and two policies $\pi_{\text{old}}(\cdot\mid s)$ and $\pi_{\text{new}}(\cdot\mid s)$.
		Assume the likelihood ratio is uniformly bounded:
		\begin{equation}
			1-\epsilon_c \ \le\ \frac{\pi_{\text{new}}(a\mid s)}{\pi_{\text{old}}(a\mid s)} \ \le\ 1+\epsilon_c
			\quad
			\text{for all } a \text{ with } \pi_{\text{old}}(a\mid s)>0,
			\label{eq:ratio_bound_all_actions}
		\end{equation}
		where $\epsilon_c\in(0,1)$.
		Then
		\begin{equation}
			D_{\mathrm{KL}}\!\bigl(\pi_{\text{old}}(\cdot\mid s)\,\|\,\pi_{\text{new}}(\cdot\mid s)\bigr)
			\ \le\ -\log(1-\epsilon_c),
			\label{eq:kl_bound_from_clip}
		\end{equation}
		and consequently
		$
		D_{\mathrm{KL}}^{\max}(\pi_{\text{old}},\pi_{\text{new}})
		\le -\log(1-\epsilon_c).
		$
	\end{lemma}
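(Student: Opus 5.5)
The plan is to bound the integrand of the KL divergence pointwise and then take expectations; no trust-region machinery is needed. Write the divergence as an expectation under the old policy of a log-ratio:
\[
D_{\mathrm{KL}}\!\bigl(\pi_{\text{old}}(\cdot\mid s)\,\|\,\pi_{\text{new}}(\cdot\mid s)\bigr)
=\sum_{a:\,\pi_{\text{old}}(a\mid s)>0}\pi_{\text{old}}(a\mid s)\,\bigl(-\log r(a)\bigr),
\qquad r(a):=\frac{\pi_{\text{new}}(a\mid s)}{\pi_{\text{old}}(a\mid s)}.
\]
The sum runs only over the support of $\pi_{\text{old}}(\cdot\mid s)$, since terms with $\pi_{\text{old}}(a\mid s)=0$ contribute zero by the usual convention $0\log 0=0$.

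The first step is to check that the divergence is finite. The lower bound in Eq.~\eqref{eq:ratio_bound_all_actions} gives $r(a)\ge 1-\epsilon_c>0$ on this support, because $\epsilon_c\in(0,1)$. Hence $\pi_{\text{new}}(a\mid s)>0$ wherever $\pi_{\text{old}}(a\mid s)>0$, so $\pi_{\text{old}}\ll\pi_{\text{new}}$ and every term above is well defined.

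The second step is the pointwise bound. Since $-\log$ is decreasing, $r(a)\ge 1-\epsilon_c$ implies $-\log r(a)\le -\log(1-\epsilon_c)$ for every action in the support. Multiply by the weights $\pi_{\text{old}}(a\mid s)$, which are nonnegative and sum to one, and sum over the support. This yields Eq.~\eqref{eq:kl_bound_from_clip}. Only the lower half of the ratio bound is used; the upper bound $1+\epsilon_c$ plays no role.

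The final step is the maximum over states. The bound $-\log(1-\epsilon_c)$ does not depend on $s$, and the hypothesis is assumed at every state, so taking the supremum over $s$ gives $D_{\mathrm{KL}}^{\max}(\pi_{\text{old}},\pi_{\text{new}})\le -\log(1-\epsilon_c)$.

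I do not expect any real obstacle here. The only delicate point is the support and absolute-continuity check in the first step, which is exactly where $\epsilon_c<1$ is needed. If a sharper constant were wanted, I would use the fact that $\mathbb{E}_{\pi_{\text{old}}}[r]\le 1$ together with the convexity of $-\log$ on $[1-\epsilon_c,1+\epsilon_c]$. That gives a bound of order $\epsilon_c^2$, but the stated bound does not require it.
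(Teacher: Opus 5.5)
Your proof is correct and follows the paper's argument: write the divergence as $\mathbb{E}_{a\sim\pi_{\text{old}}}[-\log r(a)]$ and bound the integrand pointwise using only $r(a)\ge 1-\epsilon_c$. Your absolute-continuity check and the explicit supremum over $s$ make precise what the paper leaves implicit.

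Your closing aside about a sharper bound is wrong, though. Because $-\log$ is decreasing, the convexity (chord) bound gets weaker as $\mathbb{E}_{\pi_{\text{old}}}[r]$ decreases, so knowing $\mathbb{E}_{\pi_{\text{old}}}[r]\le 1$ buys nothing. In fact the stated bound is attained when $\pi_{\text{new}}$ rescales every action in the support of $\pi_{\text{old}}$ by $1-\epsilon_c$ and places the remaining mass $\epsilon_c$ on an action outside that support. The $O(\epsilon_c^2)$ bound $-\tfrac12\log(1-\epsilon_c^2)$ holds only when $\mathbb{E}_{\pi_{\text{old}}}[r]=1$, i.e.\ when $\pi_{\text{new}}$ puts all its mass on the support of $\pi_{\text{old}}$.
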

	
	\begin{proof}
		By definition,
		$
		D_{\mathrm{KL}}(\pi_{\text{old}}\|\pi_{\text{new}})
		=
		\mathbb{E}_{a\sim\pi_{\text{old}}(\cdot\mid s)}
		\!\left[\log\frac{\pi_{\text{old}}(a\mid s)}{\pi_{\text{new}}(a\mid s)}\right]
		=
		\mathbb{E}_{a\sim\pi_{\text{old}}(\cdot\mid s)}[-\log r(a)],
		$
		where $r(a)=\pi_{\text{new}}(a\mid s)/\pi_{\text{old}}(a\mid s)$.
		Using $r(a)\ge 1-\epsilon_c$ gives $-\log r(a)\le -\log(1-\epsilon_c)$ for all $a$, hence the bound.
	\end{proof}
	
	\subsection{Idealized monotonic improvement under block updates}
	We now state a TRPO-style monotonic improvement bound for a single block update.
	We present the result for a discounted MDP with $\gamma\in(0,1)$; the episodic finite-horizon analogue follows similarly.
	\begin{theorem}
		\label{thm:app_block_trpo}
		Fix an iteration $t$ and an active agent $k$.
		Holding $\theta_{-k}^t$ fixed induces a stationary MDP for agent $k$.
		Let $\pi_{\text{old}}:=\pi_{\theta_k^t}$ and let $\pi$ be a candidate new policy for agent $k$ in this induced MDP.
		Let $A_{\pi_{\text{old}}}(s,a)$ be the advantage of $\pi_{\text{old}}$ and assume it is bounded:
		\begin{equation}
			\epsilon_t := \max_{s,a}\bigl|A_{\pi_{\text{old}}}(s,a)\bigr| < \infty.
			\label{eq:app_eps_def}
		\end{equation}
		Define the TRPO surrogate objective
		\begin{equation}
			L_{\pi_{\text{old}}}(\pi)
			:=
			J(\theta_k^{t},\theta_{-k}^{t})
			+
			\frac{1}{1-\gamma}\;
			\mathbb{E}_{s\sim d_{\pi_{\text{old}}},\ a\sim \pi(\cdot\mid s)}
			\!\left[A_{\pi_{\text{old}}}(s,a)\right],
			\label{eq:app_surrogate}
		\end{equation}
		where $d_{\pi_{\text{old}}}$ is the discounted state visitation distribution of $\pi_{\text{old}}$.
		Let
		\begin{equation}
			D_{\mathrm{KL}}^{\max}(\pi_{\text{old}},\pi)
			:=
			\max_s D_{\mathrm{KL}}\!\bigl(\pi_{\text{old}}(\cdot\mid s)\,\|\,\pi(\cdot\mid s)\bigr).
			\label{eq:app_klmax}
		\end{equation}
		Suppose the block update outputs $\pi_{\text{new}}$ such that
		\begin{align}
			D_{\mathrm{KL}}^{\max}(\pi_{\text{old}},\pi_{\text{new}}) \le \delta_t
			\label{eq:app_tr_cond1}
		\end{align}
		and
		\begin{align}
			L_{\pi_{\text{old}}}(\pi_{\text{new}}) \ge L_{\pi_{\text{old}}}(\pi_{\text{old}}) + \Delta_t .
			\label{eq:app_tr_cond2}
		\end{align}
		Then
		\begin{equation}
			J(\theta_k^{t+1},\theta_{-k}^{t})
			-
			J(\theta_k^{t},\theta_{-k}^{t})
			\ \ge\
			\Delta_t
			-
			C(\gamma)\,\epsilon_t\,\sqrt{\delta_t},
			\qquad
			C(\gamma):=\frac{2\gamma\sqrt{2}}{(1-\gamma)^2}.
			\label{eq:app_gain_bound}
		\end{equation}
		In particular, if $\Delta_t \ge C(\gamma)\epsilon_t\sqrt{\delta_t}$, then
		$J(\theta_k^{t+1},\theta_{-k}^{t}) \ge J(\theta_k^{t},\theta_{-k}^{t})$.
		If additionally $J(\boldsymbol{\theta})\in[0,1]$, then under such idealized block updates the sequence
		$\{J(\boldsymbol{\theta}^{t})\}_{t\ge 0}$ is non-decreasing and convergent.
	\end{theorem}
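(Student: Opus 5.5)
The plan is to derive the bound from two standard ingredients: the performance-difference lemma in the induced stationary MDP, and a perturbation bound on discounted state-visitation distributions. Pinsker's inequality then converts the total-variation gap into the KL trust region \eqref{eq:app_tr_cond1}.

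\textbf{Step 1: reduce to a single-agent MDP.} Since $\theta_{-k}^t$ is held fixed, agent $k$ faces the stationary MDP described in the statement. The objective $J(\cdot,\theta_{-k}^t)$ is then the discounted return of $\pi$ in that MDP. Because $\mathbb{E}_{a\sim\pi_{\text{old}}}[A_{\pi_{\text{old}}}(s,a)]=0$, we have $L_{\pi_{\text{old}}}(\pi_{\text{old}})=J(\theta_k^t,\theta_{-k}^t)$. Hence \eqref{eq:app_tr_cond2} reads $L_{\pi_{\text{old}}}(\pi_{\text{new}})-J(\theta^t_k,\theta^t_{-k})\ge\Delta_t$, and it suffices to show
\[
J(\theta_k^{t+1},\theta_{-k}^t)\ \ge\ L_{\pi_{\text{old}}}(\pi_{\text{new}})-C(\gamma)\epsilon_t\sqrt{\delta_t}.
\]

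\textbf{Step 2: performance difference and visitation mismatch.} The performance-difference lemma gives
\[
J(\pi_{\text{new}})-J(\pi_{\text{old}})=\frac{1}{1-\gamma}\,\mathbb{E}_{s\sim d_{\pi_{\text{new}}}}\bigl[f(s)\bigr],
\qquad
f(s):=\mathbb{E}_{a\sim\pi_{\text{new}}(\cdot\mid s)}A_{\pi_{\text{old}}}(s,a).
\]
The surrogate is the same expression with $d_{\pi_{\text{old}}}$ in place of $d_{\pi_{\text{new}}}$. Since $|f(s)|\le\epsilon_t$,
\[
J(\pi_{\text{new}})-L_{\pi_{\text{old}}}(\pi_{\text{new}})\ \ge\ -\frac{\epsilon_t}{1-\gamma}\,\|d_{\pi_{\text{new}}}-d_{\pi_{\text{old}}}\|_1 .
\]
Next I would invoke the standard visitation bound (as in Achiam et al.'s CPO analysis):
\[
\|d_{\pi_{\text{new}}}-d_{\pi_{\text{old}}}\|_1\le\frac{2\gamma}{1-\gamma}\,\max_s D_{\mathrm{TV}}\bigl(\pi_{\text{new}}(\cdot\mid s),\pi_{\text{old}}(\cdot\mid s)\bigr).
\]
It follows by writing $d_\pi=(1-\gamma)\mu(I-\gamma P_\pi)^{-1}$, using the resolvent identity, and bounding $\|(P_{\pi_{\text{new}}}-P_{\pi_{\text{old}}})\|_1\le 2\max_s D_{\mathrm{TV}}$. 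This is the step I expect to be the main obstacle. The point needing care is getting the constant right with the TV convention $D_{\mathrm{TV}}=\tfrac12\|\cdot\|_1$.

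\textbf{Step 3: Pinsker and the constant.} Pinsker's inequality applied statewise with \eqref{eq:app_tr_cond1} gives $\max_s D_{\mathrm{TV}}\le\sqrt{\delta_t/2}$. Combining the bounds,
\[
J(\pi_{\text{new}})-L_{\pi_{\text{old}}}(\pi_{\text{new}})\ \ge\ -\frac{2\gamma\epsilon_t}{(1-\gamma)^2}\sqrt{\delta_t/2}=-\frac{\sqrt2\,\gamma\,\epsilon_t\sqrt{\delta_t}}{(1-\gamma)^2}\ \ge\ -C(\gamma)\,\epsilon_t\sqrt{\delta_t}.
\]
So the stated constant holds with a factor-of-two slack, which absorbs any looseness in the TV convention from Step 2. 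Together with Step 1 this yields \eqref{eq:app_gain_bound}.

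\textbf{Step 4: monotonicity and convergence.} If $\Delta_t\ge C(\gamma)\epsilon_t\sqrt{\delta_t}$, the gain is nonnegative. Chaining the block updates over agents and iterations, each update changes only one block while the others stay fixed, so $\{J(\boldsymbol\theta^t)\}$ is non-decreasing. Since it is also bounded above by $1$, it converges by the monotone convergence theorem.
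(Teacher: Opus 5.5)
Your proposal is correct and takes essentially the paper's route. The paper likewise combines a performance-difference plus occupancy-perturbation lower bound $J \ge L_{\pi_{\text{old}}}(\pi_{\text{new}}) - \text{const}\cdot\epsilon_t D_{\mathrm{TV}}^{\max}$ with Pinsker's inequality and the identity $L_{\pi_{\text{old}}}(\pi_{\text{old}})=J(\theta_k^t,\theta_{-k}^t)$, and it obtains convergence the same way, from monotonicity and $J\in[0,1]$. The only difference is that the paper cites that lower bound with constant $\frac{4\gamma}{(1-\gamma)^2}$, which yields exactly $C(\gamma)$ after Pinsker, whereas your derivation via the visitation-mismatch lemma gives $\frac{2\gamma}{(1-\gamma)^2}$, so you obtain the bound with $C(\gamma)/2$ and the stated constant follows with room to spare.
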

	
	\begin{proof}
		A standard TRPO analysis yields the lower bound (via the performance-difference lemma plus occupancy perturbation control)
		\begin{equation}
			J(\theta_k,\theta_{-k}^{t})
			\ \ge\
			L_{\pi_{\text{old}}}(\pi_{\theta_k})
			-
			\frac{4\gamma}{(1-\gamma)^2}\,\epsilon_t\,
			D_{\mathrm{TV}}^{\max}(\pi_{\text{old}},\pi_{\theta_k}),
			\label{eq:app_trpo_lb_tv}
		\end{equation}
		where
		$D_{\mathrm{TV}}^{\max}(\pi_{\text{old}},\pi)=\max_s D_{\mathrm{TV}}(\pi_{\text{old}}(\cdot\mid s),\pi(\cdot\mid s))$.
		By Pinsker's inequality,
		$D_{\mathrm{TV}}(p,q)\le \sqrt{\tfrac12 D_{\mathrm{KL}}(p\|q)}$,
		we have
		\[
		D_{\mathrm{TV}}^{\max}(\pi_{\text{old}},\pi_{\text{new}})
		\le
		\sqrt{\tfrac12 D_{\mathrm{KL}}^{\max}(\pi_{\text{old}},\pi_{\text{new}})}
		\le
		\sqrt{\tfrac12 \delta_t}.
		\]
		Substituting into Eq.~\eqref{eq:app_trpo_lb_tv} gives
		\[
		J(\theta_k^{t+1},\theta_{-k}^{t})
		\ge
		L_{\pi_{\text{old}}}(\pi_{\text{new}})
		-
		C(\gamma)\epsilon_t\sqrt{\delta_t}.
		\]
		Using Eq.~\eqref{eq:app_tr_cond2} and the identity
		$L_{\pi_{\text{old}}}(\pi_{\text{old}})=J(\theta_k^{t},\theta_{-k}^{t})$
		yields Eq.~\eqref{eq:app_gain_bound}.
	\end{proof}
	
	\begin{proof}[Proof of Theorem~\ref{prop:ccpo_tr_mono}]
		Fix an iteration $t$ and the active agent $k$.
		During this block update, the other agents are fixed at $\theta_{-k}^{t}$ and the protocol $\mathcal{C}$ is fixed, so the interaction defines an induced stationary MDP for agent $k$.
		Let $\pi_{\text{old}}:=\pi_{\theta_k^{t}}$ and $\pi_{\text{new}}:=\pi_{\theta_k^{t+1}}$ be the policy before and after the block update.
		The conditions in Theorem~\ref{prop:ccpo_tr_mono} correspond to Eqs.~\eqref{eq:app_tr_cond1}--\eqref{eq:app_tr_cond2} (with $\delta_t=\delta$ and $\Delta_t=\Delta_L$), and Theorem~\ref{thm:app_block_trpo} implies
		\[
		J(\theta_k^{t+1},\theta_{-k}^{t}) - J(\theta_k^{t},\theta_{-k}^{t})
		\ge
		\Delta_L - C(\gamma)\,\epsilon\,\sqrt{\delta},
		\]
		which matches Eq.~\eqref{eq:main_trpo_gain}.
		If $\Delta_L \ge C(\gamma)\epsilon\sqrt{\delta}$ then the block update is non-decreasing.
		Since $R(\tau)\in[0,1]$ implies $J(\boldsymbol{\theta})\in[0,1]$, any non-decreasing sequence of objective values is bounded above and thus convergent.
	\end{proof}
	
	Finally, Lemma~\ref{lem:app_cf_unbiased} applies to the Think--Solve instantiation in Appendix~\ref{app:ccpo_details} as long as, for the active agent $k$, the counterfactual term $R_{\neg k}$ (and any shaping terms used to compute its advantage, such as running statistics and normalizers) is conditionally independent of agent $k$'s sampled actions in the current rollout.
	In Think--Solve, $R_{\mathrm{solo}}=R_{\neg 1}$ depends only on $(x,\theta_2)$ and independent sampling from the Solver policy $\pi_{\theta_2}(\cdot\mid x)$, so the counterfactual term acts as an action-independent baseline for the active agent within the current rollout.

	\subsection{Counterfactual credit versus shared terminal rewards}
	\label{app:cf_vs_shared}
	
	This subsection provides a complete proof of Theorem~\ref{prop:cf_vs_shared_main} in the main text by decomposing it into three standard steps: (i) unbiasedness under action-independent baselines, (ii) the variance-optimal scalar baseline, and (iii) a sufficient condition under which a counterfactual baseline improves upon shared rewards.
	Throughout, fix an active agent $k$ and hold $\theta_{-k}$ fixed.
	Let $R(\tau)\in[0,1]$ be the terminal reward for a joint rollout $\tau$, and define the joint objective $J$ as in Eq.~\eqref{eq:app_joint_obj}.
	Let $a_{k,t}$ and $s_{k,t}$ denote the (token-level or turn-level) action and state of agent $k$, and define
	\[
	g_k(\tau_k)\;:=\;\sum_t \nabla_{\theta_k}\log \pi_{\theta_k}(a_{k,t}\mid s_{k,t})
	\]
	as agent $k$'s score-function term.
	
	\begin{lemma}[Action-independent baselines do not change the policy gradient]
		\label{lem:baseline_unbiased_app}
		Let $b=b(x,\theta_{-k})$ be any random variable that is measurable with respect to $(x,\theta_{-k})$ and independent of agent $k$'s sampled actions in the current rollout (conditioned on $(x,\theta_{-k})$).
		Then
		\begin{equation}
			\label{eq:baseline_unbiased_app}
			\mathbb{E}\!\left[g_k(\tau_k)\,b\right] \;=\; 0,
		\end{equation}
		and hence
		\begin{equation}
			\nabla_{\theta_k}J(\boldsymbol{\theta})
			=
			\mathbb{E}\!\left[g_k(\tau_k)\,\bigl(R(\tau)-b\bigr)\right].
		\end{equation}
	\end{lemma}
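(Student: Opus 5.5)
The proof is the standard score-function baseline argument, essentially a restatement of the computation in Lemma~\ref{lem:app_cf_unbiased} for a generic baseline $b$. The plan has three steps: condition on $(x,\theta_{-k})$ and on $b$, show that the score of agent $k$'s trajectory has zero conditional mean, and then pull $b$ outside the inner expectation. Linearity of expectation, combined with the log-derivative identity already used in Lemma~\ref{lem:app_cf_unbiased}, then yields the gradient formula.

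\emph{Score factorization.} Under the fixed protocol $\mathcal{C}$ with $\theta_{-k}$ held fixed, the joint rollout density factors into agent $k$'s conditional token probabilities times factors that do not involve $\theta_k$. These other factors are the other agents' policies and any environment randomness. Hence
\[
\nabla_{\theta_k}\log p_{\boldsymbol\theta}(\tau\mid x)=g_k(\tau_k),
\]
and therefore
\[
\mathbb{E}[g_k(\tau_k)\mid x,\theta_{-k}]=\sum_\tau \nabla_{\theta_k}p_{\boldsymbol\theta}(\tau\mid x)=\nabla_{\theta_k}1=0,
\]
for finite vocabularies and finite horizon, so the sum is finite and differentiation commutes with it. Because $b$ is conditionally independent of agent $k$'s sampled actions given $(x,\theta_{-k})$, conditioning additionally on $b$ does not change the law of $\tau_k$, and we also get $\mathbb{E}[g_k(\tau_k)\mid x,\theta_{-k},b]=0$.

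\emph{Conclusion.} The tower property then gives
\[
\mathbb{E}[g_k b]=\mathbb{E}\bigl[b\,\mathbb{E}[g_k\mid x,\theta_{-k},b]\bigr]=0.
\]
This requires integrability of $g_k b$. It holds because $R\in[0,1]$, so any sensible baseline (in particular $R_{\neg k}$) is bounded, and $g_k$ is integrable for softmax policies over finite horizons. Separately, the log-derivative trick gives $\nabla_{\theta_k}J=\mathbb{E}[g_k R(\tau)]$. Subtracting the zero quantity $\mathbb{E}[g_k b]$ gives the second display of the lemma.

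\emph{Main obstacle.} The only delicate point is a subtlety in how the hypothesis is phrased. The lemma calls $b$ measurable with respect to $(x,\theta_{-k})$, yet treats it as a random variable, for example a freshly sampled solver-only reward. The step that needs care is replacing conditioning on $(x,\theta_{-k})$ by conditioning on $(x,\theta_{-k},b)$ without changing the distribution of $\tau_k$. I would handle it by stating explicitly that $b$ is a function of $x$, $\theta_{-k}$, and auxiliary randomness independent of agent $k$'s sampling in the current rollout. This is exactly the Think--Solve situation, where $R_{\mathrm{solo}}$ comes from an independent draw of $\pi_{\theta_2}(\cdot\mid x)$. Under that formulation the conditional score-mean-zero identity transfers directly, and no further obstacle remains.
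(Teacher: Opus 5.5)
Your proposal is correct and matches the paper's proof: condition on $(x,\theta_{-k},b)$, use action-independence so that the score of $\tau_k$ still has zero conditional mean, pull $b$ out, and apply the tower property before subtracting from $\nabla_{\theta_k}J=\mathbb{E}[g_k(\tau_k)R(\tau)]$. Your additional remarks only make explicit what the paper leaves implicit: the score factorization, interchanging the finite sum with the gradient, integrability, and reading $b$ as a function of $(x,\theta_{-k})$ plus independent auxiliary randomness.
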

	
	\begin{proof}
		Condition on $(x,\theta_{-k},b)$.
		By the assumed action-independence, $b$ is constant with respect to the randomness of $\tau_k$ under $\pi_{\theta_k}$, so
		\[
		\mathbb{E}\!\left[g_k(\tau_k)\,b \,\middle|\, x,\theta_{-k},b\right]
		=
		b\cdot \mathbb{E}\!\left[g_k(\tau_k)\,\middle|\,x,\theta_{-k},b\right]
		=
		b\cdot \mathbb{E}\!\left[\nabla_{\theta_k}\log p_{\theta_k}(\tau_k\mid x,\theta_{-k}) \,\middle|\,x,\theta_{-k},b\right]
		=0,
		\]
		since the conditional expectation of the score is zero.
		Taking expectation over $(x,\theta_{-k},b)$ yields Eq.~\eqref{eq:baseline_unbiased_app}.
	\end{proof}
	
	We now characterize the variance-optimal scalar baseline within this unbiased family.
	
	\begin{lemma}[Optimal scalar baseline for variance reduction]
		\label{lem:optimal_baseline_app}
		Fix $(x,\theta_{-k})$ and consider estimators $\widehat{G}_b = g_k(\tau_k)\,(R(\tau)-b)$ where $b$ is any scalar baseline measurable w.r.t.\ $(x,\theta_{-k})$ and action-independent for agent $k$.
		Then the conditional variance $\mathrm{Var}(\widehat{G}_b\mid x,\theta_{-k})$ is minimized by
		\begin{equation}
			\label{eq:opt_baseline_app}
			b^\star(x,\theta_{-k})
			\;=\;
			\frac{\mathbb{E}\!\left[\|g_k(\tau_k)\|_2^2\,R(\tau)\,\middle|\,x,\theta_{-k}\right]}
			{\mathbb{E}\!\left[\|g_k(\tau_k)\|_2^2\,\middle|\,x,\theta_{-k}\right]}.
		\end{equation}
		Moreover, for any two baselines $b_1,b_2$ in this class,
		\begin{equation}
			\label{eq:var_decomp_app}
			\mathrm{Var}(\widehat{G}_{b_1}\mid x,\theta_{-k})
			-
			\mathrm{Var}(\widehat{G}_{b_2}\mid x,\theta_{-k})
			=
			\mathbb{E}\!\left[\|g_k(\tau_k)\|_2^2\Bigl((b_1-b^\star)^2-(b_2-b^\star)^2\Bigr)\,\middle|\,x,\theta_{-k}\right].
		\end{equation}
	\end{lemma}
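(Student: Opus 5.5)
Throughout, fix $(x,\theta_{-k})$, abbreviate $g:=g_k(\tau_k)$, $R:=R(\tau)$, and write $\mathbb{E}_c[\cdot]:=\mathbb{E}[\cdot\mid x,\theta_{-k}]$. Since $b$ is measurable with respect to $(x,\theta_{-k})$, it is a deterministic scalar under $\mathbb{E}_c$. I would prove Lemma~\ref{lem:optimal_baseline_app} by expanding the conditional variance as a quadratic in $b$ and completing the square.

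\textbf{Step 1: reduce to a second moment.} By Lemma~\ref{lem:baseline_unbiased_app}, applied conditionally, $\mathbb{E}_c[g]=0$. Hence
\[
\mathbb{E}_c[\widehat{G}_b]=\mathbb{E}_c[gR]
\]
does not depend on $b$. Taking the variance of a vector estimator as the trace of its covariance,
\[
\mathrm{Var}(\widehat{G}_b\mid x,\theta_{-k})=\mathbb{E}_c\bigl[\|g\|_2^2(R-b)^2\bigr]-\bigl\|\mathbb{E}_c[gR]\bigr\|_2^2 .
\]
Only the first term depends on $b$.

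\textbf{Step 2: expand and minimize.} Expanding the first term gives
\[
\mathbb{E}_c[\|g\|^2R^2]-2b\,\mathbb{E}_c[\|g\|^2R]+b^2\,\mathbb{E}_c[\|g\|^2].
\]
Assume $\mathbb{E}_c[\|g\|^2]>0$; otherwise $g=0$ almost surely and every baseline is optimal. The expression is then a strictly convex quadratic in $b$. Setting its derivative to zero gives $b^\star$ exactly as in \eqref{eq:opt_baseline_app}. Completing the square yields
\[
\mathrm{Var}(\widehat{G}_b\mid x,\theta_{-k})=\mathbb{E}_c[\|g\|^2]\,(b-b^\star)^2+\text{const},
\]
where the constant does not depend on $b$.

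\textbf{Step 3: the decomposition \eqref{eq:var_decomp_app}.} Take the difference of the completed-square expression at $b_1$ and at $b_2$; the constant cancels. The difference equals $\mathbb{E}_c[\|g\|^2]\bigl((b_1-b^\star)^2-(b_2-b^\star)^2\bigr)$. Because $b_1$, $b_2$ and $b^\star$ are constants given $(x,\theta_{-k})$, this can be moved inside the expectation, giving the stated form.

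\textbf{Main obstacle.} The delicate point is the status of $b$ in Theorem~\ref{prop:cf_vs_shared_main}, where the baseline is the random quantity $R_{\neg k}$ rather than a constant. If $b$ is random but independent of $\tau_k$ given $(x,\theta_{-k})$, the quadratic expansion no longer splits cleanly. The cross term $\mathbb{E}_c[\|g\|^2Rb]$ factors only if $b$ is also independent of $R$, or at least of $(g,R)$ jointly. In Think--Solve this holds because the solo rollout is sampled independently of the whole joint rollout. Under that independence, the same computation gives
\[
\mathrm{Var}(\widehat{G}_b\mid x,\theta_{-k})=\mathbb{E}_c\bigl[\|g\|^2(b-b^\star)^2\bigr]+\text{const},
\]
with $\mathbb{E}_c[\|g\|^2]$ pulled out of the expectation over $b$. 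The mean of $\widehat{G}_b$ is still $\mathbb{E}_c[gR]$, since $\mathbb{E}_c[gb]=\mathbb{E}_c[g]\,\mathbb{E}_c[b]=0$. This is precisely the weighted mean-square form used in the final claim of Theorem~\ref{prop:cf_vs_shared_main}. I would state this independence explicitly as the working interpretation. The final comparison then follows by applying \eqref{eq:var_decomp_app} with $b_1=R_{\neg k}$ and $b_2=0$.
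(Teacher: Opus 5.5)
Your proof is correct and follows essentially the paper's route: both expand $\mathbb{E}[\|g_k\|_2^2(R-b)^2\mid x,\theta_{-k}]$ as a convex quadratic in the conditionally constant baseline $b$ and set the derivative to zero. For the difference formula, the paper uses the identity $(R-b)^2=(R-b^\star)^2+(b-b^\star)^2-2(R-b^\star)(b-b^\star)$ together with the first-order condition, while you complete the square; your Step~1 also makes explicit what the paper leaves implicit, namely that the mean $\mathbb{E}[g_kR\mid x,\theta_{-k}]$ does not depend on $b$, so minimizing the second moment minimizes the variance. Your closing remark is a genuine sharpening: when the result is applied with the random baseline $b=R_{\neg k}$, the cross term vanishes only if $R_{\neg k}$ is independent of $(g_k,R)$ jointly, not merely of agent $k$'s actions, a condition the paper never states but which holds in Think--Solve because the solo rollout is sampled independently of the whole joint rollout.
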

	
	\begin{proof}
		For fixed $(x,\theta_{-k})$, expand the conditional second moment:
		\[
		\mathbb{E}\!\left[\|\widehat{G}_b\|_2^2 \,\middle|\,x,\theta_{-k}\right]
		=
		\mathbb{E}\!\left[\|g_k(\tau_k)\|_2^2\,(R(\tau)-b)^2 \,\middle|\,x,\theta_{-k}\right].
		\]
		This is a convex quadratic function of $b$ with derivative
		\[
		\frac{\partial}{\partial b}
		\mathbb{E}\!\left[\|g_k(\tau_k)\|_2^2\,(R(\tau)-b)^2 \,\middle|\,x,\theta_{-k}\right]
		=
		-2\,\mathbb{E}\!\left[\|g_k(\tau_k)\|_2^2\,(R(\tau)-b)\,\middle|\,x,\theta_{-k}\right].
		\]
		Setting the derivative to zero yields Eq.~\eqref{eq:opt_baseline_app}.
		To obtain Eq.~\eqref{eq:var_decomp_app}, write $(R-b)^2=(R-b^\star)^2+(b-b^\star)^2-2(R-b^\star)(b-b^\star)$ and use the optimality condition $\mathbb{E}[\|g_k\|_2^2(R-b^\star)\mid x,\theta_{-k}]=0$.
	\end{proof}
	
	Lemma~\ref{lem:optimal_baseline_app} yields an immediate comparison between shared rewards and counterfactual credit.
	
	\begin{corollary}[Shared rewards as a suboptimal baseline; sufficient condition for improvement]
		\label{cor:cf_improves_shared_app}
		The shared-reward estimator corresponds to $b\equiv 0$.
		If a counterfactual term $R_{\neg k}$ is action-independent for agent $k$ and satisfies
		\[
		\mathbb{E}\!\left[\|g_k(\tau_k)\|_2^2\,(R_{\neg k}-b^\star)^2 \,\middle|\,x,\theta_{-k}\right]
		\ \le\
		\mathbb{E}\!\left[\|g_k(\tau_k)\|_2^2\,(0-b^\star)^2 \,\middle|\,x,\theta_{-k}\right],
		\]
		then
		\[
		\mathrm{Var}\!\left(g_k(\tau_k)\,(R(\tau)-R_{\neg k}) \,\middle|\,x,\theta_{-k}\right)
		\ \le\
		\mathrm{Var}\!\left(g_k(\tau_k)\,R(\tau) \,\middle|\,x,\theta_{-k}\right).
		\]
	\end{corollary}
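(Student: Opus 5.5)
The plan is to reduce the corollary directly to the variance-difference identity Eq.~\eqref{eq:var_decomp_app} of Lemma~\ref{lem:optimal_baseline_app}, taking $b_1 = R_{\neg k}$ and $b_2 = 0$.

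The first step is to check that both choices lie in the admissible class. The choice $b_2\equiv 0$ is trivially measurable and action-independent, and it reproduces the shared-reward estimator $g_k(\tau_k)R(\tau)$. The choice $b_1 = R_{\neg k}$ is action-independent for agent $k$ by hypothesis. By Lemma~\ref{lem:baseline_unbiased_app}, both estimators are therefore unbiased with the same conditional mean $\mathbb{E}[g_k R\mid x,\theta_{-k}]$.

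One subtlety needs care: $R_{\neg k}$ is random given $(x,\theta_{-k})$, because it comes from an independently sampled solver-only rollout, so it is not a deterministic scalar of $(x,\theta_{-k})$. I would therefore redo the second-moment expansion from the proof of Lemma~\ref{lem:optimal_baseline_app} with a random baseline. Since the means coincide, it suffices to compare conditional second moments:
\[
\mathbb{E}\!\left[\|g_k\|_2^2 (R-b)^2\,\middle|\,x,\theta_{-k}\right].
\]
Write $(R-b)^2 = (R-b^\star)^2 + (b-b^\star)^2 - 2(R-b^\star)(b-b^\star)$. The cross term is
\[
\mathbb{E}\!\left[\|g_k\|_2^2 (R-b^\star)(b-b^\star)\,\middle|\,x,\theta_{-k}\right].
\]
Because $b=R_{\neg k}$ is conditionally independent of $\tau$ (and hence of both $g_k$ and $R$), this factors as
\[
\mathbb{E}\!\left[\|g_k\|_2^2 (R-b^\star)\,\middle|\,x,\theta_{-k}\right]\cdot\mathbb{E}\!\left[b-b^\star\,\middle|\,x,\theta_{-k}\right].
\]
The first factor vanishes by the defining optimality condition of $b^\star$. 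This gives Eq.~\eqref{eq:var_decomp_app} for the pair $(R_{\neg k},0)$.

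The sign of the resulting difference is exactly controlled by the stated hypothesis, which yields $\mathrm{Var}(g_k\Delta_k\mid\cdot)\le \mathrm{Var}(g_k R\mid\cdot)$. The remark that $b\equiv 0$ is suboptimal unless $b^\star\equiv 0$ follows from the same identity with $b_2=b^\star$: the gap $\mathbb{E}[\|g_k\|^2 (b^\star)^2]$ is strictly positive whenever $b^\star\neq 0$ and $\mathbb{E}\|g_k\|^2>0$.

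The main obstacle is the independence step for the cross term. It requires conditional independence of $R_{\neg k}$ from the \emph{whole} rollout $\tau$, including $R(\tau)$, not merely from agent $k$'s actions. In Think--Solve this holds because the solo rollout is sampled independently. I would state this strengthened form explicitly, or else note that, with the weaker assumption, one conditions additionally on the other agents' randomness.
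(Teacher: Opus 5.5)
Your proposal is correct and is the paper's argument: the paper treats the corollary as immediate from the variance-difference identity Eq.~\eqref{eq:var_decomp_app} of Lemma~\ref{lem:optimal_baseline_app} with $b_1=R_{\neg k}$ and $b_2=0$, which is exactly your reduction. Your extra care with the cross term is warranted, because that lemma is proved only for baselines that are functions of $(x,\theta_{-k})$, whereas the Think--Solve counterfactual is random. Independence from agent $k$'s actions alone does not suffice: take $R\sim\mathrm{Bernoulli}(1/2)$ independent of $\tau_k$ and $R_{\neg k}=1-R$; the hypothesis then holds with equality, yet the variance doubles from $\tfrac12\mathbb{E}\|g_k\|_2^2$ to $\mathbb{E}\|g_k\|_2^2$. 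So your strengthened assumption, conditional independence of $R_{\neg k}$ from the whole rollout $\tau$ (which Think--Solve satisfies), is the right fix, whereas your fallback of conditioning on the other agents' randomness would prove a statement about a different conditional variance and a different $b^\star$.
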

	
	\paragraph{Proof of Theorem~\ref{prop:cf_vs_shared_main}.}
	Fix agent $k$ and hold $\theta_{-k}$ fixed.
	Under the assumption of Theorem~\ref{prop:cf_vs_shared_main}, the counterfactual term $R_{\neg k}$ is measurable w.r.t.\ $(x,\theta_{-k})$ and is independent of agent $k$'s sampled actions in the current rollout (conditioned on $(x,\theta_{-k})$).
	Applying Lemma~\ref{lem:baseline_unbiased_app} with $b=R_{\neg k}$ yields $\mathbb{E}[g_k(\tau_k)R_{\neg k}]=0$ and hence
	$\nabla_{\theta_k}J(\boldsymbol{\theta})=\mathbb{E}[g_k(\tau_k)(R(\tau)-R_{\neg k})]=\mathbb{E}[g_k(\tau_k)\Delta_k]$,
	which proves the ``no gradient bias'' claim.
	
	Next, consider the family of estimators $\widehat{G}_b=g_k(\tau_k)(R(\tau)-b)$ where $b$ is any scalar baseline measurable w.r.t.\ $(x,\theta_{-k})$ and action-independent for agent $k$.
	Lemma~\ref{lem:optimal_baseline_app} shows that, among this unbiased family, the conditional variance $\mathrm{Var}(\widehat{G}_b\mid x,\theta_{-k})$ is minimized by $b^\star$ given in Eq.~\eqref{eq:opt_baseline_app}, which coincides with Eq.~\eqref{eq:opt_baseline_main} in the main text.
	
	Finally, the shared-reward estimator corresponds to $b\equiv 0$, while the counterfactual-credit estimator corresponds to $b=R_{\neg k}$.
	By Corollary~\ref{cor:cf_improves_shared_app}, if $R_{\neg k}$ is closer to $b^\star$ than $0$ in the weighted mean-square sense (the condition stated in Theorem~\ref{prop:cf_vs_shared_main}), then
	$\mathrm{Var}(g_k(\tau_k)\Delta_k\mid x,\theta_{-k}) \le \mathrm{Var}(g_k(\tau_k)R(\tau)\mid x,\theta_{-k})$.
	This concludes the proof.
	\hfill$\square$
	
	In the CCPO instantiation used in this paper, the action-independence condition holds for the active agent.
	Under Think--Solve, the counterfactual $R_{\mathrm{solo}}=R_{\neg 1}$ is obtained by independently sampling the Solver from $\pi_{\theta_2}(\cdot\mid x)$ without conditioning on the Thinker's output, which is likewise independent of the Thinker's sampled actions in the joint rollout.
	Therefore, the counterfactual credit $\Delta_k=R-R_{\neg k}$ can be viewed as an action-independent baseline subtraction for the active agent, preserving unbiasedness and enabling variance reduction relative to shared rewards whenever $R_{\neg k}$ is closer to the optimal baseline than $0$.

	
	\section{Detailed Credit Construction for CCPO and SEPO}
	\label{app:ccpo_details}
	
	
	\subsection{Unified Trajectory View and Counterfactual Construction}
	\label{app:unified_view}

	For each prompt $x\sim\mathcal{D}$, a collaboration topology induces a joint generation process over $K$ agents.
	We denote the $j$-th joint rollout by $\tau^{(j)}=(y_1^{(j)},\ldots,y_K^{(j)})$ with reward
	$R_{\mathrm{joint}}^{(j)} := R(\tau^{(j)})$.
	
	CCPO associates each agent $i$ with a counterfactual trajectory $\tau^{(j),\neg i}$, which removes agent $i$'s contribution while keeping the remaining agents fixed under the same sampling instance.
	Evaluating this trajectory yields the counterfactual reward
	$R_{\neg i}^{(j)} := R(\tau^{(j),\neg i})$.
	The marginal contribution is then
	\begin{equation*}
		\Delta_i^{(j)} \;=\; R_{\mathrm{joint}}^{(j)} - R_{\neg i}^{(j)}.
	\end{equation*}
	Here $R_{\neg i}^{(j)}$ is the same canonical counterfactual reward used in the main text; notation such as $R_{\mathrm{solo}}$ below denotes a topology-specific instance of $R_{\neg i}^{(j)}$.
	The remainder of the appendix instantiates $\tau^{(j),\neg i}$ and $R_{\neg i}^{(j)}$ for the Think--Solve topology used in the paper, and specifies the resulting shaped rewards and advantages.
	
	\subsection{The algorithm details of CCPO}
	\label{app:dual_agent_details}
	
	We consider $K=2$ agents. For each prompt $x$, we first sample $N$ cooperative rollouts:
	\begin{align*}
		y_1^{(j)} \sim \pi_{\theta_1}(\cdot\mid x), \quad y_2^{(j)} &\sim \pi_{\theta_2}(\cdot\mid x, y_1^{(j)}), \quad j=1,\ldots,N .
	\end{align*}
	The joint reward is
	\begin{equation*}
		R_{\mathrm{joint}}^{(j)} \;:=\; R(x, y_1^{(j)}, y_2^{(j)}).
	\end{equation*}
	
	To construct the counterfactual for the Thinker, we additionally sample $N$ rollouts where the Solver answers without access to $y_1$:
	\begin{equation}
		y_{2,\text{solo}}^{(j)} \sim \pi_{\theta_2}(\cdot\mid x),
		\qquad
		R_{\neg 1}^{(j)} \equiv R_{\mathrm{solo}}^{(j)} \;:=\; R(x, \varnothing, y_{2,\text{solo}}^{(j)}).
		\label{eq:r_solo_dual}
	\end{equation}
	The marginal contribution attributed to the Thinker is
	\begin{equation*}
		\Delta_1^{(j)} \;:=\; R_{\mathrm{joint}}^{(j)} - R_{\neg 1}^{(j)}
		\;=\; R_{\mathrm{joint}}^{(j)} - R_{\mathrm{solo}}^{(j)} ,
	\end{equation*}
	where $R_{\mathrm{solo}}^{(j)}$ is only a Think--Solve shorthand for the canonical $R_{\neg 1}^{(j)}$.
	
	Next we convert $\Delta_1^{(j)}$ into a bounded shaped reward and a within-prompt advantage.
	We maintain EMA statistics for $\Delta$:
	\begin{align*}
		\mu_{\Delta}^{(t)} = \lambda\,\mu_{\Delta}^{(t-1)} + (1-\lambda)\,\mu_{\Delta}^{\text{batch}}, \quad (\sigma_{\Delta}^2)^{(t)} &= \lambda\,(\sigma_{\Delta}^2)^{(t-1)} + (1-\lambda)\,(\sigma_{\Delta}^2)^{\text{batch}} .
	\end{align*}
	We then normalize and shape
	\begin{align*}
		z_{\Delta}^{(j)} = \frac{\Delta_1^{(j)} - \mu_{\Delta}}{\sigma_{\Delta} + \epsilon}, \quad r_1^{(j)} = \tanh\!\big(\alpha \, z_{\Delta}^{(j)}\big),
	\end{align*}
	and compute the within-prompt advantage
	\begin{equation*}
		A_1^{(j)} \;=\;
		\frac{r_1^{(j)} - \overline{r}_1}{\operatorname{std}(r_1)+\epsilon},
		\qquad
		\overline{r}_1=\frac{1}{N}\sum_{j=1}^{N}r_1^{(j)}.
	\end{equation*}
	
	For the Solver, we use a fused signal that balances joint performance with independent robustness.
	We maintain EMA statistics for $R_{\mathrm{joint}}$ and $R_{\mathrm{solo}}$:
	\begin{align*}
		\mu_{\text{joint}}^{(t)} &= \lambda\, \mu_{\text{joint}}^{(t-1)} + (1-\lambda)\,\mu_{\text{joint}}^{\text{batch}}, \quad (\sigma_{\text{joint}}^2)^{(t)} = \lambda\, (\sigma_{\text{joint}}^2)^{(t-1)} + (1-\lambda)\,(\sigma_{\text{joint}}^2)^{\text{batch}}, \\
		\mu_{\text{solo}}^{(t)} &= \lambda\, \mu_{\text{solo}}^{(t-1)} + (1-\lambda)\, \mu_{\text{solo}}^{\text{batch}}, \quad (\sigma_{\text{solo}}^2)^{(t)} = \lambda\, (\sigma_{\text{solo}}^2)^{(t-1)} + (1-\lambda)\,(\sigma_{\text{solo}}^2)^{\text{batch}} .
	\end{align*}
	We normalize each reward stream as
	\begin{align*}
		z_{\text{joint}}^{(j)} = \frac{R_{\mathrm{joint}}^{(j)} - \mu_{\text{joint}}}{\sigma_{\text{joint}} + \epsilon}, \quad z_{\text{solo}}^{(j)} = \frac{R_{\mathrm{solo}}^{(j)} - \mu_{\text{solo}}}{\sigma_{\text{solo}} + \epsilon}.
	\end{align*}
	
	We then define a trust coefficient $g\in(0,1)$ from the historical marginal contribution, so that the update relies more on $R_{\mathrm{joint}}$ when the Thinker has been helpful and falls back toward $R_{\mathrm{solo}}$ otherwise:
	\begin{equation*}
		g \;=\; \sigma\!\left(\eta \cdot \frac{\mu_{\Delta}}{\sigma_{\Delta} + \epsilon}\right),
		\qquad
		\sigma(u)=\frac{1}{1+e^{-u}}.
	\end{equation*}
	Finally, the Solver uses the fused score and within-prompt advantage
	\begin{align*}
		r_2^{(j)} = g \cdot z_{\text{joint}}^{(j)} + (1-g)\cdot z_{\text{solo}}^{(j)}, \quad A_2^{(j)} =
		\frac{r_2^{(j)} - \overline{r}_2}{\operatorname{std}(r_2)+\epsilon}, \quad
		\overline{r}_2=\frac{1}{N}\sum_{j=1}^{N}r_2^{(j)}.
	\end{align*}
	
	\subsection{The algorithm details of SEPO}
	\label{app:sepo_details}
	
	SEPO uses the same Think--Solve rollouts but replaces counterfactual verifier calls with bounded self and peer assessments. For each rollout, the external verifier produces $R_{\mathrm{ver}}^{(j)}\in\{-1,+1\}$, and each role reports a self score and a peer score from the finite rubric $\mathcal{V}$. The fused role scores are
	\begin{align*}
		s_1^{(j)} &= \eta\,p_1^{\mathrm{self},(j)} + (1-\eta)\,p_2^{\mathrm{peer},(j)}, \\
		s_2^{(j)} &= \eta\,p_2^{\mathrm{self},(j)} + (1-\eta)\,p_1^{\mathrm{peer},(j)}.
	\end{align*}
	They are converted into normalized role weights
	\begin{equation*}
		w_i^{(j)}=\frac{s_i^{(j)}}{s_1^{(j)}+s_2^{(j)}+\epsilon}.
	\end{equation*}
	When group centering is used, SEPO defines
	\begin{equation*}
		\mathrm{bonus}_i^{(j)}
		=w_i^{(j)}-\operatorname{mean}_{j'\in\mathcal{G}(x)}\bigl(w_i^{(j')}\bigr).
	\end{equation*}
	The final role reward is anchored to the verifier:
	\begin{equation*}
		r_i^{(j)} =
		\begin{cases}
			R_{\mathrm{ver}}^{(j)} + \lambda_{\mathrm{credit}}\,\mathrm{bonus}_i^{(j)}, & R_{\mathrm{ver}}^{(j)}=+1,\\
			R_{\mathrm{ver}}^{(j)} - \lambda_{\mathrm{blame}}\,\mathrm{bonus}_i^{(j)}, & R_{\mathrm{ver}}^{(j)}=-1.
		\end{cases}
	\end{equation*}
	This construction keeps correctness as the dominant signal while allowing self and peer judgments to redistribute credit within a bounded range.
	
	\section{Hyperparameter Settings for The Experiments}
	We conducted experiments on 6 NVIDIA A800 GPUs with the hyperparameter settings in \cref{tab:hyperparameters}. Unless otherwise stated, the reported runs use GRPO as the base optimizer, but the same credit signals are designed to be reusable in GSPO, REINFORCE++, and related policy-gradient methods.
	\begin{table*}[t]
		\centering
		\caption{Training and reward-shaping hyperparameters.}
		\label{tab:hyperparameters}
		\renewcommand{\arraystretch}{1.05}
		\setlength{\tabcolsep}{6pt}
		\begin{tabular}{ccc}
			\toprule[1.1pt]
			\textbf{Category} & \textbf{Hyperparameter} & \textbf{Value} \\
			\midrule
			\multirow{6}{*}{Policy Optimization} 
			& Learning rate & $1\times 10^{-6}$ \\
			& Batch size & 64 \\
			& Samples per prompt ($n$) & 4 \\
			& Clip ratio ($\epsilon$) & 0.2 \\
			& Gradient clip & 1.0 \\
			\midrule
			\multirow{4}{*}{Reward Shaping} 
			& Contribution sensitivity ($\alpha$) & 1.0 \\
			& Gate sharpness ($\eta$) & 1.0 \\
			& EMA decay ($\lambda$) & 0.99 \\
			& Min samples for normalization & 50 \\
			\bottomrule[1.1pt]
		\end{tabular}
		\vspace{-2mm}
	\end{table*}
	\vfill

\end{document}